\documentclass[a4paper,10pt]{article}
\bibliographystyle{alpha}
\usepackage[a4paper,margin=2.5cm]{geometry}
\usepackage{tabularx}
\usepackage{authblk}
\usepackage{multirow}
\usepackage{subcaption}
\usepackage[labelfont=bf]{caption}
\usepackage{morefloats}
\usepackage{float}
\usepackage{forest}
\usepackage{shuffle}
\usepackage{mathtools}
\usepackage[colorlinks,citecolor=myGreen]{hyperref}
\usepackage{xstring}

\hbadness=10000 \vbadness=10000
\hfuzz=100pt

\usepackage[]{algorithm2e}

\usepackage{commath} 

\usepackage{amsmath, amsthm, amssymb}
\usepackage{mathabx} 
\usepackage{url}
\usepackage{color}
\usepackage{shuffle}
\usepackage{tikz}
\usetikzlibrary{arrows.meta}

\usepackage[linecolor=white,backgroundcolor=white,bordercolor=white,textsize=tiny]{todonotes}
\usepackage{breqn}

\usepackage{silence}
\WarningFilter{latex}{Marginpar on page} 
\WarningFilter{latex}{Label(s) may have changed} 

\usepackage[colorlinks]{hyperref}
\usepackage{cleveref}
\usepackage{graphicx}
\usepackage[font={small,it}]{caption}
\usepackage[flushleft]{threeparttable}
\usepackage{longtable}
\usepackage{enumitem}
\usepackage[normalem]{ulem}

\newtheorem{theorem}{Theorem}
\theoremstyle{plain}

\newtheorem{example}[theorem]{Example}


\theoremstyle{definition} 

\newtheorem{definition}[theorem]{Definition}
\newtheorem*{tata}{Generalization}
  {\begin{mdframed}[backgroundcolor=lightgray]\begin{tata}}%
  {\end{tata}\end{mdframed}}

\parindent0pt           
\parskip1ex            

\usepackage[utf8]{inputenc}


\newcommand{\R}{\mathbb{R}}
\newcommand{\T}{\mathbb{T}}

\newcommand{\N}{\mathbb{N}}
\newcommand{\Z}{\mathbb{Z}}
\newcommand{\E}{\mathbb{E}}
\newcommand{\F}{\mathcal{F}}



\newcommand{\mute}[1]{}

\let\todon\todo
\renewcommand{\todo}[1]{\todon{\color{magenta}#1}}

\newcommand{\evaluatedAt}[1]{\,\raisebox{-.5em}{$\vert_{#1}$}}

\newcommand\DEF[1]{\textbf{#1}}
\renewcommand\O{\mathcal{O}}

\renewcommand\F{\mathcal{F}_d}
\newcommand\CARD{\mathcal{C}}
\renewcommand\N{\mathsf{N}}
\newcommand\NE{\mathsf{NE}}
\renewcommand\E{\mathsf{E}}
\newcommand\SE{\mathsf{SE}}
\renewcommand\S{\mathsf{S}}
\newcommand\SW{\mathsf{SW}}
\newcommand\W{\mathsf{W}}
\newcommand\NW{\mathsf{NW}}

\newcommand\CTS{\mathsf{CTS}}
\newcommand\CTPS{\mathsf{CTPS}}
\newcommand\ALLOWED{\mathsf{Allowed}}
\renewcommand\r{\text{\bf r}}
\newcommand\s{\text{\bf s}}
\renewcommand\t{\text{\bf t}}
\newcommand\CUMSUM{\mathsf{cumsum}}
\newcommand\XX{\mathsf{P}} 

\renewcommand\T{\mathcal{T}} 

\newcommand\zero{\boldsymbol{0}}
\newcommand\one{\boldsymbol{1}}

\newcommand\SEMI{{\mathbb{S}}}
\newcommand\semi{{\scalebox{0.4}{$\mathbb{S}$}}}

\renewcommand\O{\mathcal{O}}
\newcommand{\perm}[1]{%
  \textcolor{black}{%
  \mathtt{%
  [%
    \StrLen{#1}[\stringLength]
    \foreach \n in {1,...,\stringLength}{
      \StrChar{#1}{\n}
      \ifnum\n<\stringLength
      \,
      \fi
    }%
  ]%
  }%
}%
}

\newcommand{\jd}[1]{
\ifvmode\else\newline\fi
{
\fbox{
\parbox{0.8\textwidth}{  \fbox{$\triangleright$\textcolor{magenta}{\textbf{jd}:}} 
#1
}}}\newline}

\newcommand\source{\mathsf{s}}
\newcommand\target{\mathsf{t}}
\definecolor{myGreen}{rgb}{0.18039216 0.49803922 0.09411765}

\begin{document}

\title{Tensor-to-Tensor Models with Fast Iterated Sum Features}

\author[1]{Joscha Diehl\thanks{Corresponding author: \url{joscha.diehl@gmail.com}}}
\author[2]{Rasheed Ibraheem}
\author[3]{Leonard Schmitz}
\author[4]{Yue Wu}

\affil[1]{
    {\small Institute of Mathematics and Computer Science, University of Greifswald,
    Walther-Rathenau-Str. 47,
    Greifswald,
    17489,
    Germany}
}
 
\affil[2]{
    {\small Maxwell Institute for Mathematical Sciences, School of Mathematics, University of Edinburgh,
    The Kings buildings, 
    Edinburgh,
    EH9 3JF, 
    Scotland, UK}
}

\affil[3]{
    {\small Institute of Mathematics, Technical University Berlin,
    Straße des 17. Juni 135,
    Berlin,
    10623,
    Germany}
}

\affil[4]{
    {\small 
    Department of Mathematics and Statistics, University of Strathclyde,
    26 Richmond St,
    Glasgow,
    G1 1XH,
    UK}
}

\maketitle

\begin{abstract}
Data in the form of images or higher-order tensors
is ubiquitous in modern deep learning applications.
Owing to their inherent high dimensionality,
the need for subquadratic layers processing such data
is even more pressing than for sequence data.

We propose a novel tensor-to-tensor layer with linear cost in the input size,
utilizing the mathematical gadget of ``corner trees'' from the field of permutation counting.
In particular, for order-two tensors, we provide an image-to-image layer that
can be plugged into image processing pipelines.
On the one hand, our method can be seen as a higher-order generalization of state-space models.
On the other hand, it is based on a multiparameter generalization of the signature of iterated integrals (or sums).

The proposed tensor-to-tensor concept is used to build a neural network layer called the Fast Iterated Sums (FIS) layer which integrates seamlessly with other layer types. We demonstrate the usability of the FIS layer with both classification and anomaly detection tasks. By replacing some layers of a smaller ResNet architecture with FIS, a similar accuracy (with a difference of only 0.1\%) was achieved in comparison to a larger ResNet while reducing the number of trainable parameters and multi-add operations. The FIS layer was also used to build an anomaly detection model that achieved an average AUROC of 97.3\% on the texture images of the popular MVTec AD dataset. The processing and modelling codes are publicly available at \url{https://github.com/diehlj/fast-iterated-sums}.
    
\end{abstract}
\bigskip
\noindent\textbf{Keywords:}
    Iterated sums, 
   Signatures,
   State-space models,
   Permutation patterns,
   Anomaly detection

\section{Introduction}

Modern deep learning models for sequential data, especially large language models \cite{sutskever2014sequence,radford2018improving,devlin2019bert}, are built upon the sequence-to-sequence layer. The Transformer layer currently dominates these applications, even with its quadratic complexity \cite{vaswani2017attention,touvron2023llama}. While sequences are order-one tensors, many applications critically rely on tensor-to-tensor layers for higher-order tensors, with image-to-image layers being a prominent example \cite{ronneberger2015u,cciccek20163d,isola2017image,wang2018video}. The typically high dimensionality of such data (e.g., a 1000×1000 RGB image, when flattened, has three million dimensions) makes directly applying the Transformer architecture even more computationally prohibitive than for sequences. Nevertheless, the transformer architecture has been very successfully applied to images,
for example by using a patch-based approach \cite{dosovitskiy2020image}, where an image is divided into smaller, manageable patches that are then processed. 

On a different front, a new wave of research for stream data is zeroing in on linear-cost alternatives. Among these, state-space models (SSMs), which are essentially discretized linear, controlled ordinary differential equations (ODEs) \cite{lyons1994differential},
have emerged as highly promising \cite{gu2021efficiently,gu2023mamba}.
While various SSM formulations have been proposed for image data in earlier works \cite{roesser1975discrete},
their recent application has expanded to image-to-image layers \cite{pauli2024state,liu2024vmamba}.

Mathematically, controlled ODEs and hence state-space models \cite{muca2024theoretical} are closely related
to the \emph{signature
method}, rooted in the theory of \emph{iterated integrals}
\cite{chen1957integration}.
Though not being used for large language models, signature method has proven successful across different
applications in machine learning of time-series data \cite{morrill2021neural,ibraheem2023early,diehlFRUITSFeatureExtraction2024}.

This success is multifaceted, owing to
its universal approximation properties \cite{chevyrev2022signature},
its solid mathematical foundation in control theory, and its adaptability in handling irregularly sampled data while preserving event ordering. Foremost among these advantages, however, is its computational efficiency, based on a dynamic programming principle which yields linear-time computable features.
The method has been successfully integrated with various existing machine learning architectures,
for example Transformer models \cite{moreno2024rough}.

Given the signature method's proven ability to efficiently extract faithful features from one-parameter data, say, sequence data,
it is natural to consider extending its principles to the two-dimensional domain.
The \emph{sums signature} of two-parameter data has been introduced
in \cite{DS23}, which characterizes the data up to a natural equivalence relation.
In contrast to the one-parameter signature, most of its entries are \emph{not} of an 
iterated form, and are not easily computable.
Nevertheless, in that work a subset of (almost) iterated sums is identified, which can be
computed in linear time.  

\emph{Integral}-signatures for images have been proposed in
\cite{diehl2024signature,ZLT_2022,giusti2025topological,lotter2024signaturematricesmembranes},
again with characterizing properties as well as linear-time computability of a subset of the terms. Using terms in the subset that are computable in linear time, some primary
attempts have proven their effectiveness in capturing texure image information
in classification \cite{ZLT22} and anomaly detection \cite{xie20252dsigdetectsemisupervisedframeworkanomaly}.

Using ideas from higher category theory,
a different type of integrals-signature is proposed in
\cite{chevyrev2024multiplicative,lee2024surface}
(see also \cite{lee2023random})
that is in its entirety linear-time computable and easily parallelizable.
Owing to the large equivalence classes of data (images)
on which these higher-categorial integrals-signatures are constant,
it is not clear if
they are expressive enough for practical applications.%
\footnote{We note that this drawback can be mitigated to some extent by ``lifting'' the data to a higher dimensional space,
as is also sometimes done in the one-parameter case.}

In this work, we demonstrate that a substantially larger subset of the sums signature introduced in \cite{DS23} can be computed in linear time than previously believed. Drawing inspiration from permutation pattern counting, we develop a novel differentiable algorithm that efficiently computes this subset with linear time and space complexity relative to the input size. We use this to propose a new deep learning primitive, the \emph{Fast Iterated Sums} (FIS) layer—a tensor-to-tensor operation designed for scalable and expressive representation learning. 

Our contributions can be summarised as follows:
\begin{itemize}
    \item Efficient computation of sums signature (\Cref{sec:fis}):
    We present an algorithm to compute a significant subset of the two-parameter sums signature of \cite{DS23} in linear time and space.
    The approach is based on techniques from permutation pattern counting, specifically using the idea of ``corner trees'' of \cite{even2021counting} (in \Cref{sec:background} we give more precise background needed for the current work).
    This is the first known linear-time method for such a subset of the sums signature, significantly improving on prior approaches with polynomial complexity.
    \item Link to the integrals-signature (\Cref{sec:fis}):
After discretization, our algorithm also yields the first practical method for computing non-diagonal terms of the integrals-signature introduced in \cite{diehl2024signature, giusti2025topological}, also known as the \emph{id-signature}. This represents an important advance in making the full integrals-signature accessible for applications.
    \item Extension to higher-order tensors (\Cref{sec:order_p_tensors}):
The algorithm generalizes to higher-order input tensors, enabling its application in settings beyond matrices (order-two tensors). This includes spatiotemporal or multi-modal data where tensor representations naturally arise.
\item A new deep learning primitive - the FIS Layer (\Cref{sec:FIS_layers_blocks}):
We introduce a novel differentiable tensor-to-tensor layer, the FIS layer, that encapsulates our algorithm. We also define an FIS block by stacking multiple such layers, analogous to stacked convolutional or attention layers in standard architectures. 
\end{itemize}
	We evaluate the performance of the FIS layer on several
	image classification tasks in \Cref{ss:image-classification},
	using the layer as a drop-in replacement
	for convolutional layers in ResNet.
	We further integrate an FIS-backed encoder with a convolution-backed decoder in an auto-encoder network for the anomaly detection tasks.
	We validate the anomaly detector on the well-known texture images of the MVTec AD \cite{bergmann2019mvtec} dataset in \Cref{ss:anomaly-detection}.
	Several experiments were set up to show the robustness of the proposed FIS layer to its hyperparameters and various network architectures, \Cref{sec:appendix}.

\section{Background}
\label{sec:background}

We provide more detail on
three topics mentioned in the introduction:
iterated sums, state space models, and the - seemingly - 
unrelated topic of permutation pattern counting.

\subsection{Iterated sums}
\label{ss:iterated_sums}

We focus here on iterated \emph{sums}, since 
a) when applied to discrete-time data,
   they are a (strict) generalization of iterated integrals \cite{diehl2020time},
b) they form the basis for our discussion of the higher-order generalization,
and
c) they allow to use semirings
different to the usual one of real numbers \cite{diehl2020tropical}.%
\footnote{In particular,
the familiar maxplus semiring will be seen to be beneficial in the experiments.}

For an input sequence $x = (x_1, \dots, x_T)$,
assumed here to be one-dimensional (i.e. scalar-valued) for simplicity,
(algebraic) iterated sums are of the form
\begin{align*}
	y_t = \sum_{1 \le i_1 < \dots < i_k \le t}
			x_{i_1}^{\alpha_1} \dots x_{i_t}^{\alpha_k}, \quad t=1,\dots,T,
\end{align*}
for some positive integers $\alpha_1, \dots, \alpha_k$.
$y_{1:T}$ is computable in linear time using dynamic programming.
Moreover, Blelloch's scan algorithm \cite{blelloch1990prefix} can be used
to saturate available GPU resources.

A learnable sequence-to-sequence layer is obtained by replacing the monomial terms by parameterized functions
(\cite{tothseq2tens,krieg2024}), i.e.
\begin{align*}
	y_t = \sum_{1 \le i_1 < \dots < i_k \le t}
			f^\theta_1(x_{i_1}) \dots f^\theta_k(x_{i_k}), \quad t=1,\dots,T,
\end{align*}
where $f^\theta_1, \dots, f^\theta_k$ are parameterized functions.
Linear-time computation and parallelizability are retained.

The generalization of these methods to images and higher order tensors is challenging,
although several candidates have been proposed.
They can be broadly classified into two subsets.

Geometrically motivated generalizations, using the language of double
categories \cite{chevyrev2024multiplicative,lee2024surface}
have the advantage of being easily parallelizable.
The downside is that they are (much)
less expressive than other methods and are conceptually difficult.

The second subset is based on the observation
that iterated sums are sums over certain \emph{point constellations}
in the (time-)domain of the data. For example, the iterated sum
\begin{align*}
	\sum_{1 \le t_1 < t_2 \le T} x_{t_1} x_{t_2}^2,
\end{align*}
sums over all pairs of integer points $(t_1,t_2)$ in the interval $[1,T]$ which
satisfy $t_1 < t_2$, see \Cref{fig:point_constellations_1d}.

\begin{figure}[H]
	\centering
	\includegraphics[width=0.5\textwidth]{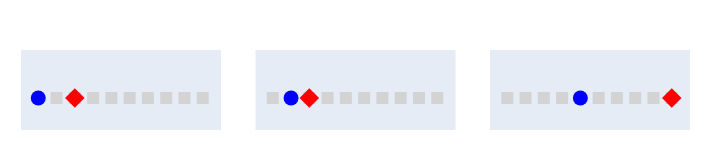}
	\caption{Example point constellations for the iterated sum $\sum\limits_{1 \le {\color{blue}t_1} < {\color{red}t_2} \le T-1} x_{{\color{blue}t_1}} x_{{\color{red}t_2}}^2$. $t_1$ symbolized by a blue circle, $t_2$ by a red diamon.}
	\label{fig:point_constellations_1d}
\end{figure}

This viewpoint can be generalized to two-parameter data,
as is done in \cite{DS23} for arbitrary ``point constellations''
in the plane.
For example the quadruples of
points $\r^1, \r^2, \r^3, \r^4 \in [T_1] \times [T_2]$ in a
rectangular subset of the plane
satisfying the predicate (we write $\r^i = (r^i_1,r^i_2)$)
\begin{align*}
	&\XX(\r^1,\r^2,\r^3,\r^4)
	:=
	          (r^1_1 = r^3_1)
	\bigwedge (r^2_1 = r^4_1)
	\bigwedge (r^1_2 = r^2_2)
	\bigwedge (r^3_2 = r^4_2) \\
	&\qquad\qquad\qquad\qquad
	\bigwedge (r^1_1 < r^2_1)
	\bigwedge (r^3_1 < r^4_1)
	\bigwedge (r^1_2 < r^3_2)
	\bigwedge (r^3_2 < r^4_2),
\end{align*}
lead, for image data $z: [T_1] \times [T_2] \to \R$,
to a sum of the form (see \Cref{fig:point_constellations_2d})
\begin{align}
	\label{eq:thesum}
	\sum_{\substack{\r^1,\r^2,\r^3,\r^4\ :\\ \XX(\r^1,\r^2,\r^3,\r^4)}} z_{\r^1} z_{\r^2} z_{\r^3} z_{\r^4}.
\end{align}

\begin{figure}[H]
	\centering
	\includegraphics[width=0.5\textwidth]{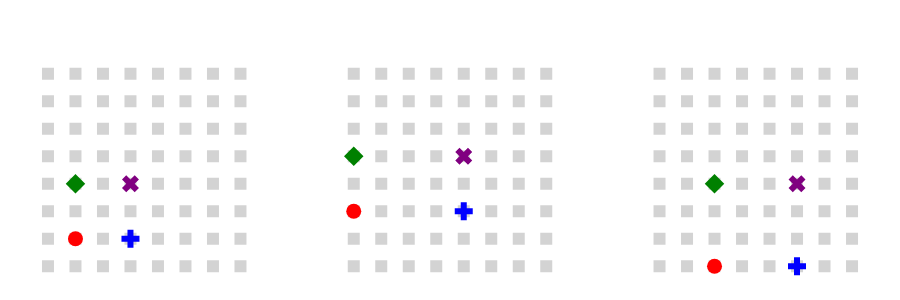}
	\caption{
		Example point constellations for the sum \eqref{eq:thesum}.
		$\r^1$ symbolized  by a red circle,
		$\r^2$ by a blue cross,
		$\r^3$ by a green diamond,
		$\r^4$ by a purple $\mathsf{x}$.
	}
	\label{fig:point_constellations_2d}
\end{figure}

Such expressions are sufficiently expressive (they characterize the input, up to a natural equivalence relation),
but are, in general, not computable in linear time \cite{DS23}.

In the current paper, only a subset of these sums is considered;
namely those whose point constellation can be described by a tree.
The example just given is \emph{not} of this form. The tree structure yields a linear-time algorithm for computing the sum,
which is the main algorithmic core of the current work.

\subsection{State-space models}
\label{ss:ssm}

We explain why state space models do not generalize straightforwardly to
two-parameter data, and why the sums introduced in this paper can nonetheless be
viewed as such a generalization.

A discrete-time affine, time-inhomogeneous
\DEF{state space model}
is of the form
\begin{align*}
	y_{t+1} = A_t y_t + B_t x_t.
\end{align*}
Here $y_t, x_t$ are vector-valued (\DEF{state} and \DEF{input}, respectively),
and $A_t, B_t$ are matrices.
We disregard $B_t$ for now,
and recall that $A$ usually depends on the input $x$.
We then have
\begin{align}
	\label{eq:yT}
	y_T
	=
	A_{T-1}\dots \dots A_1 y_1,
\end{align}
which, if the $A_t$ are noncommuting matrices%
\footnote{A necessary condition to get `interesting' features.},
fundamentally uses the \emph{total order} on the time axis.
It is therefore not obvious how to generalize this to two-parameter data, for example image data.
For example,
\begin{align*}
	\prod_{t_1=1}^{T_1-1}
		\prod_{t_2=1}^{T_2-1}
			A_{t_1,t_2}, 
\end{align*}
depends on the (at this stage, arbitrary)
order of the two products.

Going back to \eqref{eq:yT},
let us consider a
polynomial dependence of $A_t$ on $x_t$
of the form
\begin{align*}
	A_t =
	\begin{bsmallmatrix}
		1 &  x_t^2 & 0 \\
		0 &  1   & x_t \\
		0 &  0   & 1
	\end{bsmallmatrix}
\end{align*}

Then, 
\begin{align*}
	A_{T-1}\dots \dots A_1
	&=
	\begin{bmatrix}
		1 & \sum_{1\le t\le T-1} x_t^2 & \sum_{1 \le t_1 < t_2 \le T-1} x_{t_1} x_{t_2}^2 \\
		0 & 1                      & \sum_{1\le t\le T-1} x_t \\
		0 & 0					  & 1
	\end{bmatrix}.
\end{align*}
We thus see iterated sums (\Cref{ss:iterated_sums}) appearing in the entries of $y_T$.%
\footnote{
Any iterated sum of \cite{diehl2020time} can be obtained
in this fashion for a suitable choice of $A_t$.
Conversely,
since an arbitrary $A_t$ can be approximated by a polynomial (in $x_t$ and $t$),
iterated sums can formally approximate the solution to any state-space model.
}
As we have seen, iterated sums are generalizable to two-parameter data.

\subsection{Permutation patterns and corner trees}
\label{ss:corner_trees}

The counting of occurrences of a (small) permutation
inside a (large) permutation is a fundamental problem in combinatorics,
with applications in computer science and statistics.
If the size of the small permutation is $k$
and the size of the large permutation is $n$,
the naive algorithm has time-complexity $\O(n^k)$.
The breakthrough of \cite{even2021counting} was to provide
an algorithm with linear complexity (up to logarithmic factors)
for a large subspace of (small) permutations.
A \emph{corner tree} is defined in \cite{even2021counting}
as a finite rooted tree with labels from the set of cardinal directions
$\{\NE, \SE, \SW, \NW\}$. In the original work, the labels are
on vertices, but we use the reformulation of \cite{diehl2024generalization}
where the labels are on the edges.
An example of a corner tree is shown in \Cref{fig:corner_tree},
where we also label the vertices for later reference.
\begin{figure}[H]
    \centering
    \begin{subfigure}[b]{0.48\textwidth}
        \centering
        \begin{align*}
            \T =
            \scalebox{0.7}{
            \begin{tikzpicture}[
                node distance=2.7cm,
                line width=1.2pt,
                font=\footnotesize,
                arrows={-Latex},
                baseline={(left.base)}
            ]
                \node[circle,fill=green!20,inner sep=1pt] (root) at (0,0) {$a$};
                \node[circle,fill=red!20,inner sep=1pt] (left) at (-1.5,-2) {$b$};
                \node[circle,fill=blue!20,inner sep=1pt] (right) at (1.5,-2) {$c$};
                \draw (root) -- node[left] {$\SE$} (left);
                \draw (root) -- node[right] {$\NW$} (right);
            \end{tikzpicture}}
        \end{align*}
        \caption{A corner tree.}
        \label{fig:corner_tree}
    \end{subfigure}
    \hfill
    \begin{subfigure}[b]{0.48\textwidth}
        \centering
        \begin{tikzpicture}[scale=0.35]
            \draw[step=1cm,gray!30,very thin] (0,0) grid (5,5);
            
            \draw[->,thick] (0,0) -- (5.5,0) node[right] {};
            \draw[->,thick] (0,0) -- (0,5.5) node[above] {};
            
            \foreach \x in {1,...,5}
                \node[below] at (\x-0.5,0) {\x};
            
            \foreach \y in {1,...,5}
                \node[left] at (0,\y-0.5) {\y};
            
            
            \draw[blue!20,very thick,fill=blue!20] (1.5,4.5) circle (0.25) node[above right, text=black] {$c$}; 
            \draw[green!20,very thick,fill=green!20] (3.5,3.5) circle (0.25) node[above right, text=black] {$a$}; 
            \draw[red!20,very thick,fill=red!20] (4.5,0.5) circle (0.25) node[above right, text=black] {$b$}; 

            \filldraw (0.5,2.5) circle (0.1); 
            \filldraw (1.5,4.5) circle (0.1); 
            \filldraw (2.5,1.5) circle (0.1); 
            \filldraw (3.5,3.5) circle (0.1); 
            \filldraw (4.5,0.5) circle (0.1); 
            
            \draw[<-,black, thick] (1.5,4.5) -- (3.5,3.5) node[midway, below] {\scalebox{0.6}{$\NW$}};
            \draw[->,black, thick] (3.5,3.5) -- (4.5,0.5) node[midway, right] {\scalebox{0.6}{$\SE$}};
        \end{tikzpicture}
        \caption{An occurence in the permutation $\mathtt{[3\,5\,2\,4\,1]}$.}
        \label{fig:corner_tree_permutation}
    \end{subfigure}
    \caption{A corner tree and its occurrence in a permutation.}
    \label{fig:corner_trees_combined}
\end{figure}
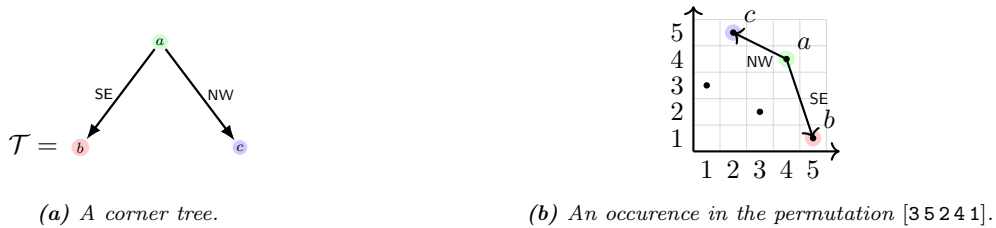

An ``occurence'' of such a corner tree in a permutation is a map
from the vertices of the corner tree to the indices of the permutation such that
the labels of the edges ``are respected''.
More precisely, vertices of the tree are mapped to indices in the permutation
and the east-west direction corresponds to the ordering of the indices
whereas the north-south direction corresponds to the ordering of the values
of the permutation at the indices.

For example, the map $c \mapsto 2$, $a \mapsto 4$, $b \mapsto 5$
is an occurence of the corner tree of \Cref{fig:corner_tree}
in the permutation $\perm{35241}$, visualized in \Cref{fig:corner_tree_permutation}.
A moment's reflection shows that the number of occurences of the corner tree
from \Cref{fig:corner_tree} in facts counts the permutation pattern $\perm{321}$.
In general, corner trees count \emph{linear combinations} of permutation patterns.

Several works have built on this result \cite{beniamini2024counting,diehl2024generalization} to provide efficient (but not necessarily linear)
algorithms to larger subspaces of permutations.
In this work we show how the central idea of corner trees
in \cite{even2021counting} can be used in a seemingly unrelated
context, namely our proposed tensor-to-tensor layer.

\section{Calculating two-parameter iterated sums with corner trees}
\label{sec:fis}

We slightly modify the definition of corner trees
from \cite{even2021counting}.
First, we use the reformulation of \cite{diehl2024generalization}
which puts the direction information on the \emph{edges}.
Moreover, we allow for edge labels from a larger set of eight, instead of four,
cardinal directions.
We furthermore label nodes, indicating arbitrary computation at a single point in the plane (i.e. at a single pixel). 
Denote with $\F$ the set of functions $f: \R^d \to \R$
and let $\CARD := \{\N, \NE, \E, \SE, \S, \SW, \W, \NW\}$.

\begin{definition}
	A \DEF{corner tree} is a rooted, edge- and vertex-labeled
	tree 
	\begin{align*}
		\T = (V(\T), E(\T), \mathfrak v: V(\T) \to \F, \mathfrak e: E(\T) \to \CARD).
	\end{align*}
	Here
	\begin{itemize}
		\item $V(\T)$ is a finite set of vertices,
		\item
			$E(\T) \subset V(\T) \times V(\T)$ is a set of edges,
			such that $(V(\T), E(\T))$ form a rooted tree,%
			\footnote{That is, there is a distinguished vertex called the \DEF{root} such that there is a unique directed path from the root to every vertex.}
		\item $\mathfrak v$ assigns a real-valued function on $\R^d$ to each vertex (this function will be evaluated pointwise when taking sums),
		\item
			$\mathfrak e$ assigns a cardinal direction to each edge (this will be used to determine which points are allowed in the sum,
			see \eqref{eq:allowed} below).
	\end{itemize}

	The edges are considered as directed \emph{away} from the root.
	For an edge $e=(a,b) \in E(\T)$ write $\source(e) = a$
	for the source and $\target(e) = b$ for the target.	
\end{definition}

\medskip

Overloading the notation, we let the cardinal directions
also denote the following logical predicates
on two points $\r = (r_1,r_2), \s = (s_1,s_2) \in \Z^2$
(throughout, $\wedge$ is the logical \emph{and}):
\begin{align*}
	\N( \r, \s )  \ &:=\ \ (r_1 = s_1) \wedge (r_2 < s_2) &\qquad \NE( \r, \s ) \ &:=\ \ (r_1 < s_1) \wedge (r_2 < s_2) \\
	\E( \r, \s ) \  &:=\ \ (r_1 < s_1) \wedge (r_2 = s_2) &\qquad \SE( \r, \s ) \ &:=\ \ (r_1 < s_1) \wedge (r_2 > s_2) \\
	\S( \r, \s )  \ &:=\ \ (r_1 = s_1) \wedge (r_2 > s_2) &\qquad \SW( \r, \s ) \ &:=\ \ (r_1 > s_1) \wedge (r_2 > s_2) \\
	\W( \r, \s )  \ &:=\ \ (r_1 > s_1) \wedge (r_2 = s_2) &\qquad \NW( \r, \s ) \ &:=\ \ (r_1 > s_1) \wedge (r_2 < s_2).
\end{align*}
For example, $\NE(\r,\s)$ is true iff $\s$ is in the northeast quadrant relative to $\r$.

\medskip

Given a corner tree $\T$ with $|V(\T)| = n$ vertices,
write $V(\T) = \{ v_1, \dots, v_n \}$.
For notational simplicity, we assume
that the \emph{root is equal to $v_1$.}

For functions $\r: V(\T) \to \Z^2$
(written as $\r^{v_1}, \dots, \r^{v_n}$ or
$\r^1, \dots, \r^n$)
define
the predicate
\begin{align}
	\label{eq:allowed}
	\ALLOWED(\T, \r)
	:=
	\bigwedge_{e \in E(\T)} \mathfrak e(e)\Big( \r^{\source(e)}, \r^{\target(e)} \Big).
\end{align}

For a function $z: [0,T_1]\times[0,T_2] \to \R^d$ (the data)
we define the \DEF{corner tree sum} of $\T$ as
\begin{align*}
	\CTS(\T,z)
	:=
	\sum_{\substack{\r: V(\T) \to [0,T_1]\times[0,T_2]\ :\\  \ALLOWED(\T,\r)}}
		\prod_{i=1}^n \mathfrak v( v_i )\Big( z_{\r^i} \Big).
\end{align*}

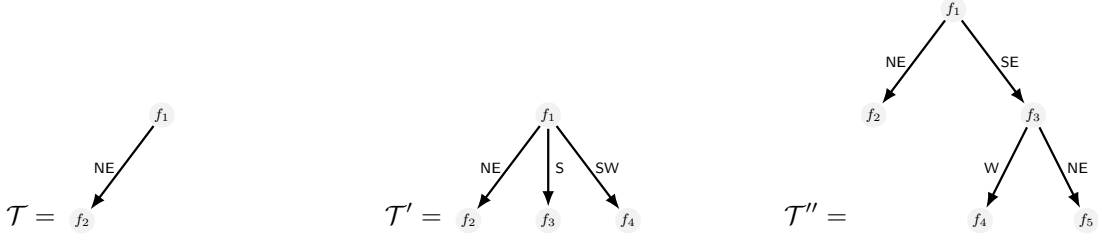
\begin{figure}
    \centering
    \begin{subfigure}[b]{0.30\textwidth}
        \centering
		\begin{align*}
			\T =
			\scalebox{0.7}{
		\begin{tikzpicture}[
			node distance=2.7cm,
			line width=1.2pt,
			font=\footnotesize,
			arrows={-Latex},
			baseline={(left.base)}
		]
			\node[circle,fill=gray!10,inner sep=1pt] (root) at (0,0) {$f_1$};
			\node[circle,fill=gray!10,inner sep=1pt] (left) at (-1.5,-2) {$f_2$};
			\draw (root) -- node[left] {$\NE$} (left);
		\end{tikzpicture}}
		\end{align*}
    \end{subfigure}
    \hfill
    \begin{subfigure}[b]{0.30\textwidth}
        \centering
		\begin{align*}
			\T' = \scalebox{0.7}{
			\begin{tikzpicture}[
			node distance=2.7cm,
			line width=1.2pt,
			font=\footnotesize,
			arrows={-Latex},
			baseline={(left.base)}
			]
			\node[circle,fill=gray!10,inner sep=1pt] (root) at (0,0) {$f_1$};
			\node[circle,fill=gray!10,inner sep=1pt] (left) at (-1.5,-2) {$f_2$};
			\node[circle,fill=gray!10,inner sep=1pt] (bottom) at (0,-2) {$f_3$};
			\node[circle,fill=gray!10,inner sep=1pt] (right) at (1.5,-2) {$f_4$};
			\draw (root) -- node[left] {$\NE$} (left);
			\draw (root) -- node[right] {$\S$} (bottom);
			\draw (root) -- node[right] {$\SW$} (right);
			\end{tikzpicture}
			}
		\end{align*}
    \end{subfigure}
	\hfill
    \begin{subfigure}[b]{0.30\textwidth}
        \centering
		\begin{align*}
			\T'' =
			\scalebox{0.7}{
		\begin{tikzpicture}[
			node distance=2.7cm,
			line width=1.2pt,
			font=\footnotesize,
			arrows={-Latex},
			baseline={(rightleft.base)}
		]
			\node[circle,fill=gray!10,inner sep=1pt] (root) at (0,0) {$f_1$};
			\node[circle,fill=gray!10,inner sep=1pt] (left) at (-1.5,-2) {$f_2$};
			\node[circle,fill=gray!10,inner sep=1pt] (right) at (1.5,-2) {$f_3$};
			\node[circle,fill=gray!10,inner sep=1pt] (rightleft) at (0.5,-4) {$f_4$};
			\node[circle,fill=gray!10,inner sep=1pt] (rightright) at (2.5,-4) {$f_5$};
			\draw (root) -- node[left] {$\NE$} (left);
			\draw (root) -- node[right] {$\SE$} (right);
			\draw (right) -- node[left] {$\W$} (rightleft);
			\draw (right) -- node[right] {$\NE$} (rightright);
		\end{tikzpicture}}
		\end{align*}
    \end{subfigure}
    \caption{Three examples of corner trees.}
    \label{fig:three_corner_trees}
\end{figure}

\begin{example}
	\label{ex:corner_tree}
	With the corner tree $\T$ in \Cref{fig:three_corner_trees} we get
	\begin{align*}
		\ALLOWED(\T, \r)
		&=
		\NE( \r^1, \r^2 )
		=
		(r^1_1 < r^2_1) \wedge (r^1_2 < r^2_2),
	\end{align*}
	and then
	\begin{align*}
		\CTS( \T, z )
		=
		\sum_{
			\substack
			{
				\r^1,\r^2 \in [0,T_1]\times[0,T_2] \\
				(r^1_1 < r^2_1) \wedge
				(r^1_2 < r^2_2)
			}
			}
		f_1( z_{\r^1} ) f_2( z_{\r^2} ).
	\end{align*}
	In words: sum over all points $\r^1,\r^2$
	where $\r^2$ is in the northeast quadrant of $\r^1$,
	evaluating $f_1$ at $z_{\r^1}$ and $f_2$ at $z_{\r^2}$
	and taking the product of these evaluations.
	In the case of a tree of height one,
	we can visualize the allowed point conestellations
	by plotting a point representing the root,
	and the plotting the possible positions corresponding
	to its children.
	For $\T$ this is done in \Cref{fig:order_2_pointconstellation}.

	If $f_1(x) = f_2(x) = x^{(i)}$, for an $i\in [d]$,
	this gives
	\begin{align*}
			\sum_{
			\substack
			{
				\r^1,\r^2 \in [0,T_1]\times[0,T_2]\ : \\
				(r^1_1 < r^2_1) \wedge
				(r^1_2 < r^2_2)
			}
			}
		z^{(i)}_{\r^1} z^{(i)}_{\r^2}.
	\end{align*}
	If
	$z_{r_1,r_2} = x_{r_1+1,r_2+1} - x_{r_1,r_2+1} - x_{r_1+1,r_2} + x_{r_1,r_2}$
	is the discrete approximation of the second derivative $\partial_{12}$ of some $x$,
	we obtain an approximation of the \emph{id-signature} of
	\cite{diehl2024signature} on the word $\mathtt{ii}$.

	\bigskip

	For the corner tree $\T'$ in \Cref{fig:three_corner_trees} we 
	visualize the allowed point constellations in \Cref{fig:order_2_pointconstellation_2}.
\end{example}

\begin{figure}[h]
    \centering
    \begin{subfigure}[b]{0.48\textwidth}
        \centering
        \includegraphics[width=\textwidth]{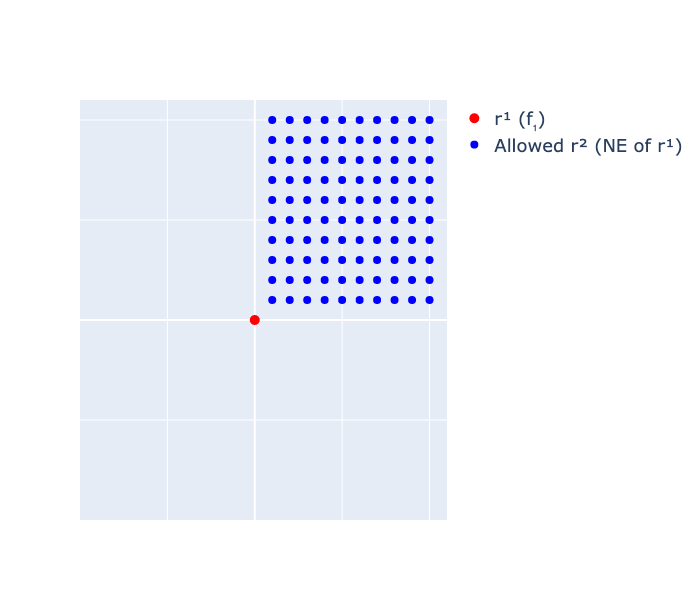}
        \caption{Point constellations for the corner tree $\T$.}
        \label{fig:order_2_pointconstellation}
    \end{subfigure}
    \hfill
    \begin{subfigure}[b]{0.48\textwidth}
        \centering
        \includegraphics[width=\textwidth]{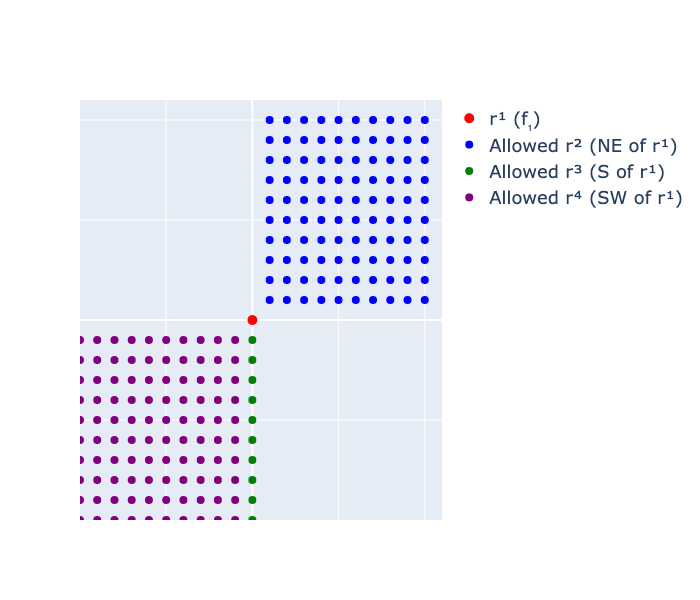}
        \caption{Point constellations for the corner tree $\T'$.}
        \label{fig:order_2_pointconstellation_2}
    \end{subfigure}
    \caption{Visualization of point constellations for different corner trees.}
    \label{fig:point_constellations}
\end{figure}

\begin{example}
	\label{ex:corner_tree_2}
	With the corner tree $\T''$ in \Cref{fig:three_corner_trees} we get
	\begin{align*}
		\CTS( \T'', z )
		=
		\sum_{
			\substack
			{
				(\r^1,\r^2,\r^3,\r^4,\r^5)
                \in ([0,T_1]\times[0,T_2])^5\ : \\
				(r^1_1 < r^2_1) \wedge (r^1_2 < r^2_2),
				(r^1_1 < r^3_1) \wedge (r^1_2 > r^3_2) \\
				(r^3_1 > r^4_1) \wedge (r^3_2 = r^4_2),
				(r^3_1 < r^5_1) \wedge (r^3_2 < r^5_2)
			}
		}
		f_1(z_{\r^1}) f_2(z_{\r^2}) f_3(z_{\r^3}) f_4(z_{\r^4}) f_5(z_{\r^5}).
	\end{align*}
\end{example}

We now derive a recursive formula for $\CTS$
which leads to an efficient algorithm.

For $v \in V(\T)$ denote
the corner subtree rooted at $v$ by $\T\evaluatedAt{v}$. For the rooted tree $\T$ with only one vertex $v_1$,
define the \DEF{corner-tree pre-sum} 
as
\begin{align*}
	\CTPS(\T,z)_{t_1,t_2}
	:= \mathfrak v(v_1)( z_{t_1,t_2} ) \in \R, \quad t_1 \in [0,T_1], t_2 \in [0,T_2].
\end{align*}
Then,
for a tree $\T$ with more than one vertex
define ($t_1 \in [0,T_1], t_2 \in [0,T_2]$)
\begin{align}
	\CTPS(\T,z)_{t_1,t_2}
	:=
	\mathfrak v(v_1)( z_{t_1,t_2} )
	\prod_{ e=(v_1, v_i) \in E(\T) }
		\CUMSUM\left( \mathfrak e(e), \CTPS(\T\evaluatedAt{v_i},z) \right)_{t_1,t_2}.
		\label{eq:CTPS}
\end{align}
Note that the product is over the outgoing edges of the root.
Here, for $\t = (t_1,t_2)$
and $\XX \in \CARD$,
\begin{align}
	\label{eq:cumsum}
	\CUMSUM(\XX, x)_{\t}
	:=
	\sum_{\r : \XX( \t, \r)} x_{\r}.
\end{align}

\begin{example}
	\label{ex:re-expression}
	With the corner tree $\T$ from \Cref{ex:corner_tree},
	\begin{align*}
		\CTPS( \T\evaluatedAt{v_2},z)_{t_1,t_2}
		=
				f_2( z_{t_1,t_2} ),
	\end{align*}
	and
	\begin{align*}
		\CTPS( \T, z)_{t_1,t_2}
		=
		f_1( z_{t_1,t_2} )
		\sum_{
			\substack{
				r_1 > t_1 \\ 
				r_2 > t_2}}
			f_2( z_{r_1,r_2} ).
	\end{align*}

	With the corner tree $\T''$ from
	\Cref{ex:corner_tree_2},
	\begin{align*}
		\CTPS( \T''\evaluatedAt{v_i},z)_{t_1,t_2} &= f_i( z_{t_1,t_2} ),  \quad i=2,4,5.
	\end{align*}
	Then
	\begin{align*}
		\CTPS( \T''\evaluatedAt{v_3},z)_{t_1,t_2}
		&=
		f_3( z_{t_1,t_2} )
		\left( \sum_{\r: \W(\t,\r)} f_4( z_{\r} ) \right)
		\left( \sum_{\r: \NE(\t,\r)} f_5( z_{\r} ) \right),
	\end{align*}
	and
	\begin{align*}
		\CTPS(\T'',z)_{t_1,t_2}
		&=
		f_1( z_{t_1,t_2} )
		\left(\sum_{\r: \NE(\t,\r)} f_2( z_{\r} )\right)
		\left(\sum_{\r: \SE(\t,\r)} \CTPS( \T''\evaluatedAt{v_3},z)_{r_1,r_2}\right).
	\end{align*}

	Note, that in both cases
	\begin{align*}
		\CTS( \T, z )  &= \sum_{(t_1,t_2) \in [0,T_1]\times[0,T_2]} \CTPS( \T, z )_{t_1,t_2} \\
		\CTS( \T'', z ) &= \sum_{(t_1,t_2) \in [0,T_1]\times[0,T_2]} \CTPS( \T'', z )_{t_1,t_2}.
	\end{align*}
\end{example}

We show next that 
the re-expression of the corner tree sum
as in \Cref{ex:re-expression} is possible in general.
\begin{theorem}
	\label{thm:CTS_CTPS}

	For any corner tree $\T$ and data $z$,
	\begin{align*}
            \CTS(\T,z)
		=
		\sum_{\r \in [0,T_1]\times[0,T_2]}
			\CTPS(\T,z)_{\r}.
	\end{align*}
	
\end{theorem}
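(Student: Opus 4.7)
The plan is to establish a stronger pointwise identity by induction on $|V(\T)|$: for every $\t \in [0,T_1]\times[0,T_2]$,
\begin{align*}
\CTPS(\T, z)_\t \;=\; \sum_{\substack{\r : V(\T) \to [0,T_1]\times[0,T_2]\\ \ALLOWED(\T, \r),\ \r^{v_1} = \t}} \prod_{i=1}^{n} \mathfrak v(v_i)\bigl(z_{\r^i}\bigr).
\end{align*}
Summing over $\t$ and noting that every $\r$ contributing to $\CTS(\T,z)$ is uniquely classified by the value $\r^{v_1}$ then yields the theorem.

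The base case $|V(\T)|=1$ is immediate, both sides collapsing to $\mathfrak v(v_1)(z_\t)$. For the inductive step, let $u_1,\dots,u_k$ be the children of $v_1$, write $e_j=(v_1,u_j)$, and set $\T_j := \T\evaluatedAt{u_j}$. The structural observation driving the argument is the disjoint decomposition $V(\T) = \{v_1\} \sqcup V(\T_1) \sqcup \cdots \sqcup V(\T_k)$ and $E(\T) = \{e_1,\dots,e_k\} \sqcup E(\T_1) \sqcup \cdots \sqcup E(\T_k)$, from which $\ALLOWED$ factors as
\begin{align*}
\ALLOWED(\T,\r) \;=\; \bigwedge_{j=1}^k \mathfrak e(e_j)\bigl(\r^{v_1},\r^{u_j}\bigr) \;\wedge\; \bigwedge_{j=1}^k \ALLOWED\bigl(\T_j,\, \r|_{V(\T_j)}\bigr).
\end{align*}

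Fixing $\r^{v_1}=\t$ and using this factorization together with the disjointness of the $V(\T_j)$, the weighted sum decouples into
\begin{align*}
\mathfrak v(v_1)(z_\t) \prod_{j=1}^k \sum_{\substack{\s \in [0,T_1]\times[0,T_2]\\ \mathfrak e(e_j)(\t,\s)}} \sum_{\substack{\r_j:\ V(\T_j)\to [0,T_1]\times[0,T_2]\\ \ALLOWED(\T_j,\r_j),\ \r_j^{u_j}=\s}} \prod_{v\in V(\T_j)} \mathfrak v(v)\bigl(z_{\r_j^v}\bigr).
\end{align*}
By the induction hypothesis applied to each $\T_j$, the innermost sum equals $\CTPS(\T_j,z)_\s$, and the subsequent sum over $\s$ matches the definition of $\CUMSUM(\mathfrak e(e_j), \CTPS(\T_j,z))_\t$ in \eqref{eq:cumsum}. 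The resulting expression is precisely the recursive definition \eqref{eq:CTPS} of $\CTPS(\T,z)_\t$, closing the induction.

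The main obstacle is purely combinatorial bookkeeping: one must verify cleanly that $\ALLOWED$ splits edge-by-edge into the root-edge constraints plus the subtree constraints, and that the pairwise disjointness of the $V(\T_j)$ (with $v_1$ excluded from each) is what permits both the $\mathfrak v$-product and the constraint system to factor independently across the $k$ subtrees. Once this tree-surgery step is articulated, the remainder of the argument is a straightforward unfolding of the definitions of $\CUMSUM$ and $\CTPS$.
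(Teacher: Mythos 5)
Your proof is correct, and the key step — the edge-by-edge factorization of $\ALLOWED$ into the root-edge constraints conjoined with the subtree constraints — is exactly the identity the paper states as its entire proof. You have simply made explicit the pointwise identity for $\CTPS$ and the induction on $|V(\T)|$ that the paper leaves implicit behind ``This follows from.''
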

\begin{proof}
	This follows from
	\begin{align*}
		\ALLOWED(\T, \r)
		=
		\bigwedge_{e = (v_1, v_i) \in E(\T)}
			\mathfrak e(e)\Big( \r^{1}, \r^{i} \Big)
			\wedge
			\ALLOWED(\T\evaluatedAt{v_i}, \r\evaluatedAt{V(\T\evaluatedAt{v_i})}),
	\end{align*}
	where we recall that $v_1$ is the root of $\T$
	and the product is then over the outgoing edges of the root. 
	Here, for a subset $A \subset V(\T)$,
	$\r\evaluatedAt{A}$ denotes the restriction of $\r$ to $A$.
\end{proof}

\subsection{Using different semirings}
\label{subsec:semiring}

We can work over a \DEF{commutative semiring}, \cite{golan2013semirings}.
This is a tuple $(\SEMI,\oplus_\semi,\odot_\semi,\zero_\semi,\one_\semi)$ where 

        $(\SEMI,\oplus_\semi,\zero_\semi)$ is a commutative monoid  with unit $\zero_\semi$,
        $(\SEMI,\odot_\semi,\one_\semi)$ is a commutative monoid with unit $\one_\semi$, 
        zero is attractive, i.e.
        $\zero_\semi \odot_\semi s = s \odot_\semi \zero_\semi = \zero_\semi$ for all $s \in \SEMI$, and
    multiplication distributes over addition.
Assume that the node functions $\mathfrak v$ take values in $\SEMI$ instead of $\R$.
Then the corresponding corner tree sum in the semiring is defined by
replacing in the formulas the usual product by the semiring multiplication $\odot$
and the usual sum by the semiring addition $\oplus$,
\begin{align*}
	\text{\bf CTS}^{\SEMI}(\T,z)
	:=
	\bigoplus_{\r: V(\T) \to [0,T_1]\times[0,T_2]\ :\  \ALLOWED(\T,\r)}
		\bigodot_{i=1}^n \mathfrak v( v_i )\Big( z_{\r^i} \Big).
\end{align*}
Every commutative ring, in particular every field, is a commutative semiring.
The most well-known ``honest'' semiring, also in computer science,
is the max-plus 
semiring $(\R \cup \{-\infty\}, \max, +, -\infty, 0)$.
In this case, the first sum of \Cref{ex:corner_tree} would be
\begin{align*}
	\CTS^{\mathsf{max-plus}}( \T, z )
	=
	\max_{
		\substack
		{
			\r^1,\r^2 \in [0,T_1]\times[0,T_2] \\
			(r^1_1 < r^2_1) \wedge
			(r^1_2 < r^2_2)
		}
		}
	\left( f_1( z_{\r^1} ) + f_2( z_{\r^2} ) \right).
\end{align*}
Semirings for iterated sums of \emph{sequential} data have been considered in
\cite{diehl2020tropical} and in particular the
max-plus semiring has proven useful
in some applications \cite{diehlFRUITSFeatureExtraction2024,krieg2024}.%
\footnote{This comes as no surprise, as 
the popular ReLU networks are precisely the max-plus rational maps
\cite{zhang2018tropical}.
}
We shall see that the max-plus semiring in particular is useful for the corner tree sums of tensors as well.

\subsection{The algorithm}

The recursive formula for $\CTPS$ can be implemented
as an algorithm with space and time complexity $\O(n T_1 T_2)$,
where $n$ is the number of vertices of the corner tree.
Instead of providing pseudo code,
we refer to our annotated
implementation.
We note that since the algorithm centrally uses
cumulative sums, \eqref{eq:cumsum}, it is able to saturate GPU
resources using Blelloch's prefix sum algorithm
(\Cref{ss:iterated_sums}).
Moreover, the calculation of all children
of a node, \eqref{eq:CTPS}, can happen in parallel.

\section{Generalization to order-p tensors}
\label{sec:order_p_tensors}

Mutatis mutandis, the corner tree sums described
in the previous section for order-$2$ tensors
can be applied to order-$p$ tensors.
We describe the case $p=3$, which applies, for example, to video data.

First, to make the generalization easier, we encode the cardinal directions
in the case $p=2$ differently.
\newcommand{\dirpattern}[1]{\texttt{#1}}
We have two axes, north-south and east-west.
In either direction, we can have a positive, zero or negative direction,
which we encode as $\dirpattern{+}, \dirpattern{=}, \dirpattern{-}$.
The labels used so far then correspond to
\footnote{The symbol $\dirpattern{==}$ was unused thus far since it
enforces two points to be at the same position.
This can more easily be encoded
by changing the node label to be a product of functions.}
\begin{align*}
	&\N \leadsto \dirpattern{+=}, \quad
	&&\NE \leadsto \dirpattern{++}, \quad
	&&\E  \leadsto \dirpattern{=+}, \quad
	&&\SE \leadsto \dirpattern{-+}, \\
	&\S  \leadsto \dirpattern{-=}, \quad
	&&\SW \leadsto \dirpattern{--}, \quad
	&&\W  \leadsto \dirpattern{=-}, \quad
	&&\NW \leadsto \dirpattern{+-}.
\end{align*}

Now, for $p=3$, 
the cardinal directions (the edge labels) are
\begin{align*}
	\ell_1\ell_2\ell_3, \quad \ell_1,\ell_2,\ell_3 \in \{\dirpattern{-},\dirpattern{=},\dirpattern{+}\}.	
\end{align*}
An example of a corner tree in this case is
\begin{align}
	\label{eq:corner_tree_order_3}
	\T =
	\scalebox{0.7}{
	\begin{tikzpicture}[
		node distance=2.7cm,
		line width=1.2pt,
		font=\footnotesize,
		arrows={-Latex},
		baseline={(left.base)}
	]
		\node[circle,fill=gray!10,inner sep=1pt] (root) at (0,0) {$f_1$};
		\node[circle,fill=gray!10,inner sep=1pt] (left) at (-1.5,-2) {$f_2$};
		\node[circle,fill=gray!10,inner sep=1pt] (right) at (0,-2) {$f_3$};
		\node[circle,fill=gray!10,inner sep=1pt] (rightright) at (1.5,-2) {$f_4$};
		\draw (root) -- node[left] {$\dirpattern{+++}$} (left);
		\draw (root) -- node[midway, fill=white] {$\dirpattern{=+-}$} (right);
		\draw (root) -- node[right] {$\dirpattern{+==}$} (rightright);
	\end{tikzpicture}}
\end{align}
The corresponding sum is
\begin{align*}
    \sum_{\substack{\r^1,\r^2,\r^3,\r^4 \in [0,T_1]\times[0,T_2]\times[0,T_3] \\
			        r^2_1 \ge r^1_1, 
			        r^2_2 \ge r^1_2, 
			        r^2_3 \ge r^1_3, 
			        r^3_1 = r^1_1, 
			        r^3_2 \ge r^1_2, 
			        r^3_3 \le r^1_3, 
			        r^4_1 \ge r^1_1, 
			        r^4_2 = r^1_2, 
			        r^4_3 = r^1_3, 
					 }}
	f_1(z_{\r^1}) f_2(z_{\r^2}) f_3(z_{\r^3}) f_4(z_{\r^4}).
\end{align*}
The point constellations summed over are visualized in 
\Cref{fig:order_3_pointconstellation}.

\begin{figure}[H]
  \centering
  \includegraphics[width=0.6\textwidth]{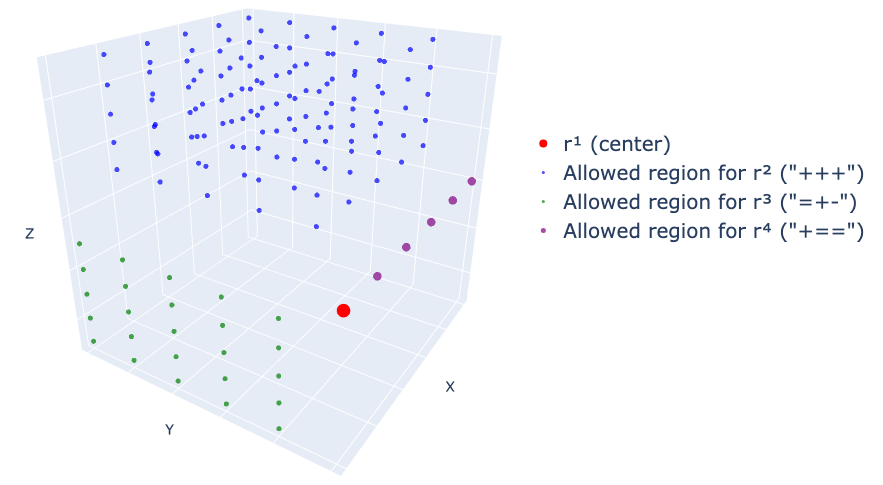}
  \caption{Point constellation for the corner tree in \eqref{eq:corner_tree_order_3}.}
  \label{fig:order_3_pointconstellation}
\end{figure}

The algorithm described in \Cref{sec:fis}
works analogously for this case,
with complexity again being linear in the input size.

\begin{example}

The edge-labels can be thought of as
declaring a ``zone of influence''
of the information, or pixels,
corresponding to a child node
in relation to a parent node.

We visualize this in a three-dimensional space,
where one dimension is thought of as time (think: video data),
and the other two dimensions are thought of as spatial dimensions,
\Cref{fig:tree_and_frames}.
Node $f_1$ is the root node,
and it ``collects'' information from the current frame via its children
$f_2$, $f_3$.
Then, information from prior frames is ``collected''
via the child $f_4$.
This node in turn collects information from the $f_4$'s frame
via its children $f_5$ and $f_6$.

\begin{figure}[H]
	\centering
	\begin{subfigure}[b]{0.48\textwidth}
		\centering
		\scalebox{0.7}{
		\begin{tikzpicture}[
			node distance=2.7cm,
			line width=1.2pt,
			font=\footnotesize,
			arrows={-Latex},
			baseline={(left.base)}
		]
			\node[circle,fill=gray!10,inner sep=1pt] (root) at (0,0) {$f_1$};
			\node[circle,fill=gray!10,inner sep=1pt] (left) at (-1.5,-2) {$f_2$};
			\node[circle,fill=gray!10,inner sep=1pt] (middle) at (0.,-2) {$f_3$};
			\node[circle,fill=gray!10,inner sep=1pt] (right) at (1.5,-2) {$f_4$};
			\node[circle,fill=gray!10,inner sep=1pt] (rightleft) at (0.5,-4) {$f_5$};
			\node[circle,fill=gray!10,inner sep=1pt] (rightright) at (2.5,-4) {$f_6$};
			\draw (root) -- node[left] {$\dirpattern{--=}$} (left);
			\draw (root) -- node[midway, fill=white] {$\dirpattern{+-=}$} (middle);
			\draw (root) -- node[right] {$\dirpattern{==-}$} (right);
			\draw (right) -- node[left] {$\dirpattern{--=}$} (rightleft);
			\draw (right) -- node[right] {$\dirpattern{+-=}$} (rightright);
		\end{tikzpicture}}
		\caption{Corner tree.}
		\label{fig:corner_tree_structure}
	\end{subfigure}
	\hfill
	\begin{subfigure}[b]{0.48\textwidth}
		\centering
		\includegraphics[width=\textwidth]{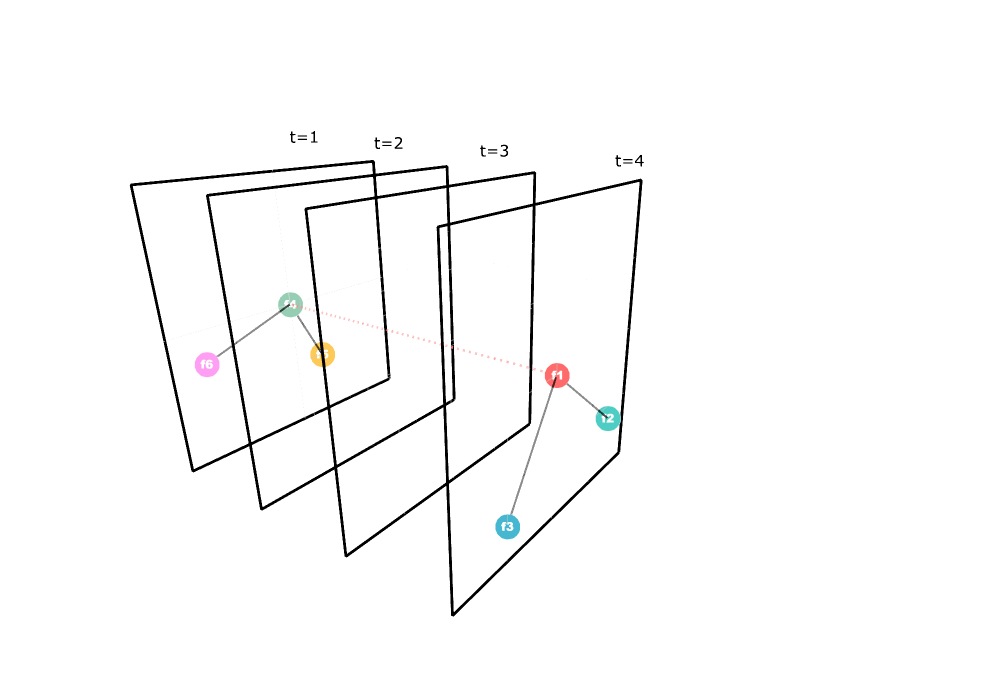}
		\caption{Visualization of a possible point constellation.}
		\label{fig:frames_visualization}
	\end{subfigure}
	\caption{Corner tree and its a three dimensional visualization.}
	\label{fig:tree_and_frames}
\end{figure}

\end{example}

\section{FIS layers and FIS blocks}
\label{sec:FIS_layers_blocks}

This section focuses on the discussion of integrating Fast Iterated Sums (FIS), as described in Section \ref{sec:fis}, into existing neural network architectures to facilitate efficient image data processing while retaining essential structural properties of the visual domain. The fundamental unit of our approach is the FIS layer, whose parameterization and structure we now describe before introducing its integration into a larger building block.

\subsection{The FIS layer}
\label{ss:FIS_layer}

A typical FIS layer (see Figure~\ref{fig:fis-layer-block}(\textbf{a}) for an illustration of its architecture) is hyperparameterized by the number of corner trees, the number of nodes in each tree, the semiring type (real or max-plus), the number of channels in the input image, and a seed (an integer) to ensure reproducibility.
An input of shape $(B, C, H, W)$ to the layer results in an output of shape $(B, N_T, H, W)$, where $B, C, H, W$ and $N_T$ are the number of batches, the number of channels, height, width and the number of corner trees respectively.

The trees used by the layer are generated at random at instantiation, using the seed provided.
The user can choose between unconstrained trees, where only the number of nodes of fixed
($\mathcal T'$ and $\mathcal T''$ in \Cref{fig:three_corner_trees}),
linear trees
(also called ladder trees, or chain trees) where the number of nodes is fixed \emph{and} the tree structure is linear
or
linear-$\NE$ trees, where the number of nodes is fixed, the tree structure is linear, \emph{and} all edges are labeled $\NE$
($\mathcal T$ in \Cref{fig:three_corner_trees}).

The cardinal directions of the edges are drawn, except for the linear-$\NE$ trees, uniformly at instantiation, again using the seed provided.
The functions on the nodes (i.e. $\mathfrak v$ of \Cref{sec:fis})
are taken to be linear, one-dimensional projections of the input.
The weights of these projections constitute the trainable
parameters of this layer.

The FIS layer is proposed to \textit{complement} traditional convolution layers, offering a more expressive and structured feature representation.

\subsection{The FIS block}
\label{ss:FIS_block}

Similar to the Basic Block of a ResNet architecture \cite{he2016deep, chen2020image}, we propose a block of FIS layers and call it FIS Block.
This block starts with two FIS Layers and ends with an Adaptive Pooling Layer (with either average or max pooling).
Each FIS Layer is followed by a Batch Normalization and Rectified Linear Unit Layer (ReLU) for numerical stability and added nonlinearity, respectively.
Figure~\ref{fig:fis-layer-block}(\textbf{b}) illustrates an example of this block. An input of shape $(B, C, H, W)$ results in an output of shape $(B, N_T, H', W')$, where $(H', W')$ is the output size of the chosen Adaptive Pooling. 

\begin{figure}[H]
    \centering
    \begin{subfigure}[h]{0.49\textwidth}
	    \centering
        \includegraphics[width=0.6\textwidth]{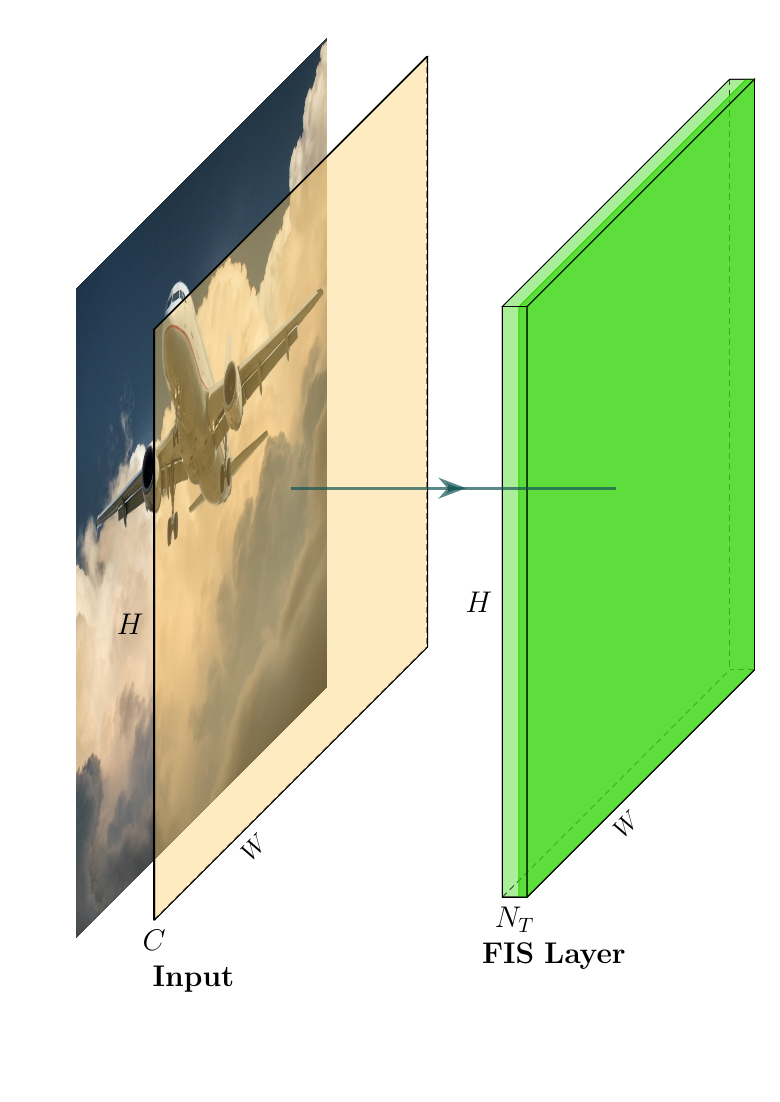}
         \caption{}
     \end{subfigure}	
     \begin{subfigure}[h]{0.49\textwidth}
	    \centering
        \includegraphics[width=0.6\textwidth]{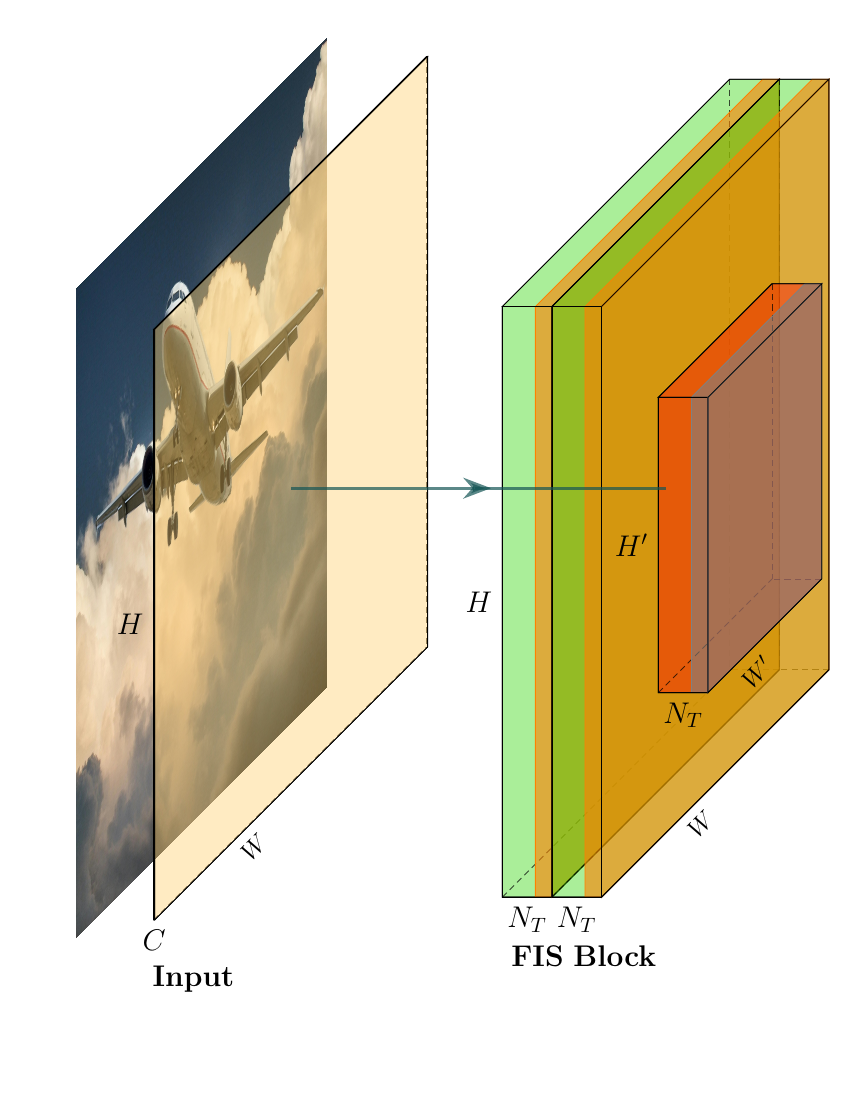}
        \caption{}
   \end{subfigure}	
   
 \caption{Illustraction of (\textbf{a}) FIS Layer and (\textbf{b}) FIS Block architecture. An FIS Block comprises two FIS Layers (in green) and an Adaptive Pooling Layer (in red) of output size $(H', W')$. A Batch Normalization and RELU layers (both in yellow) follow each FIS Layer.}
\label{fig:fis-layer-block}
\end{figure}

\section{Applications}
In this section, we evaluate the performance of the Fast Iterated Sums (FIS) features on two canonical tasks in computer vision: image classification and anomaly detection. All the experiments were implemented in PyTorch and were run on one NVIDIA RTX A4000 GPU.

\subsection{Image classification}
\label{ss:image-classification}

We consider image classification as our first application to showcase the effectiveness of our proposed feature representation in image-related tasks, as it is a fundamental problem in computer vision with well-established evaluation benchmarks. 
To ensure a fair and meaningful comparison, we choose ResNet \cite{chen2020image} as our benchmark due to its widespread adoption, strong performance, and well-understood architectural properties. Specifically, we evaluate our approach on CIFAR-10 and CIFAR-100 \cite{krizhevsky2009learning}, two widely used datasets that cover a diverse range of image classification challenges, allowing us to assess the generalizability of our method across different levels of task complexity.
ResNet's established, though not state-of-the-art, baseline enables clear assessment of our feature
representation's impact.
We focus on CIFAR datasets to demonstrate theoretical validity before scaling to larger datasets like ImageNet.

\subsubsection{Classifier network architecture}
\label{sec:classification-intro}

To showcase the applicability of the FIS features to a classification problem, some specific variants of the ResNet architecture proposed in \cite{chen2020image} (which we refer to as the base ResNet) are replaced with the FIS Block. We aim to have a competitive accuracy as the base ResNet while reducing the number of trainable network parameters and multiply-adds operations.

To this end, we try out four different architectures
(which we call derived or modified ResNet), namely, \emph{L23}, \emph{L2}, \emph{L3}, and \emph{Downsample}.
The L23, L2, L3, and Downsample architectures are the model architectures obtained by replacing layers 2 \& 3, layer 2, layer 3, and the corresponding base ResNet's downsample layer, respectively, with the FIS Block. Four base ResNet architectures were considered: ResNet20, ResNet32, ResNet44, and ResNet56. An illustration of ResNet20 as proposed in \cite{chen2020image} and its L23 derivative are given in Figure~\ref{fig:resnet20-architecture} and Figure~\ref{fig:resnet20-l23-architecture}, respectively. 

We emphasize two major differences between the base and derived ResNet architectures. First, each Basic Block of a base ResNet consists of two convolution layers where each convolution is followed by a Batch Normalization and RELU layers. On the other hand, an Adaptive Pooling Layer comes after the two FIS Layers in the FIS Block to pool the width and height of an input to a specific size. Second, the Basic block in a given layer of the base ResNet is repeated $n$ times based on the desired architecture. For instance, each Basic Block of the base ResNet20 is repeated $n = 3$ times; see Figure~\ref{fig:resnet20-architecture}. However, there is only one FIS Block in a given layer of the derived ResNet architecture. This is where the reduction in the number of parameters and multi-add operations stems from. 

\begin{figure}[H]
  \centering
  \includegraphics[width=0.7\textwidth]{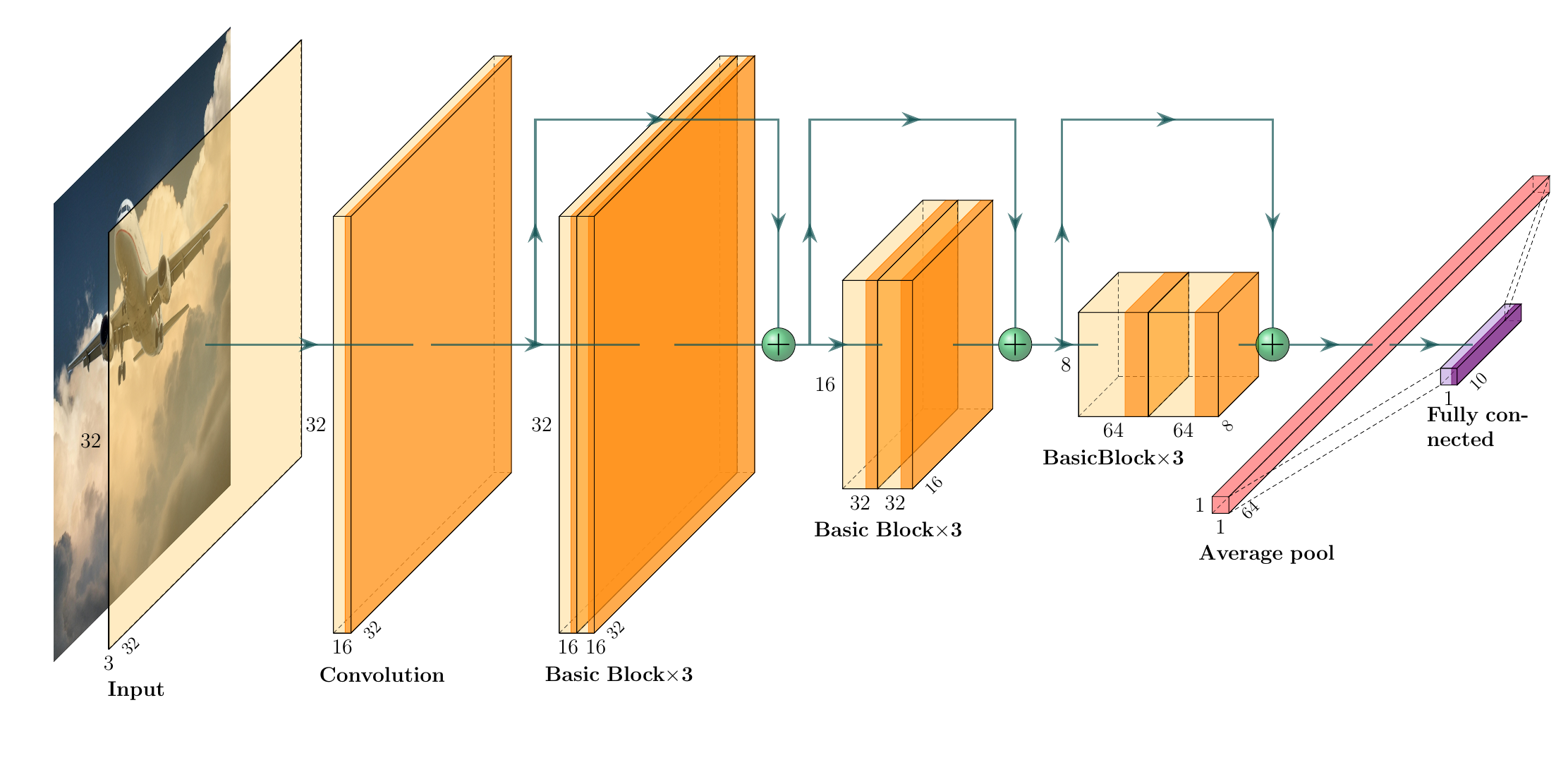}
  \caption{ResNet20 architecture designed for CIFAR-10 dataset. This is the architecture used in \cite{chen2020image}. A batch normalization and RELU layers follow each convolution layer.}
  \label{fig:resnet20-architecture}
\end{figure}

\begin{figure}[H]
  \centering
  \includegraphics[width=0.7\textwidth]{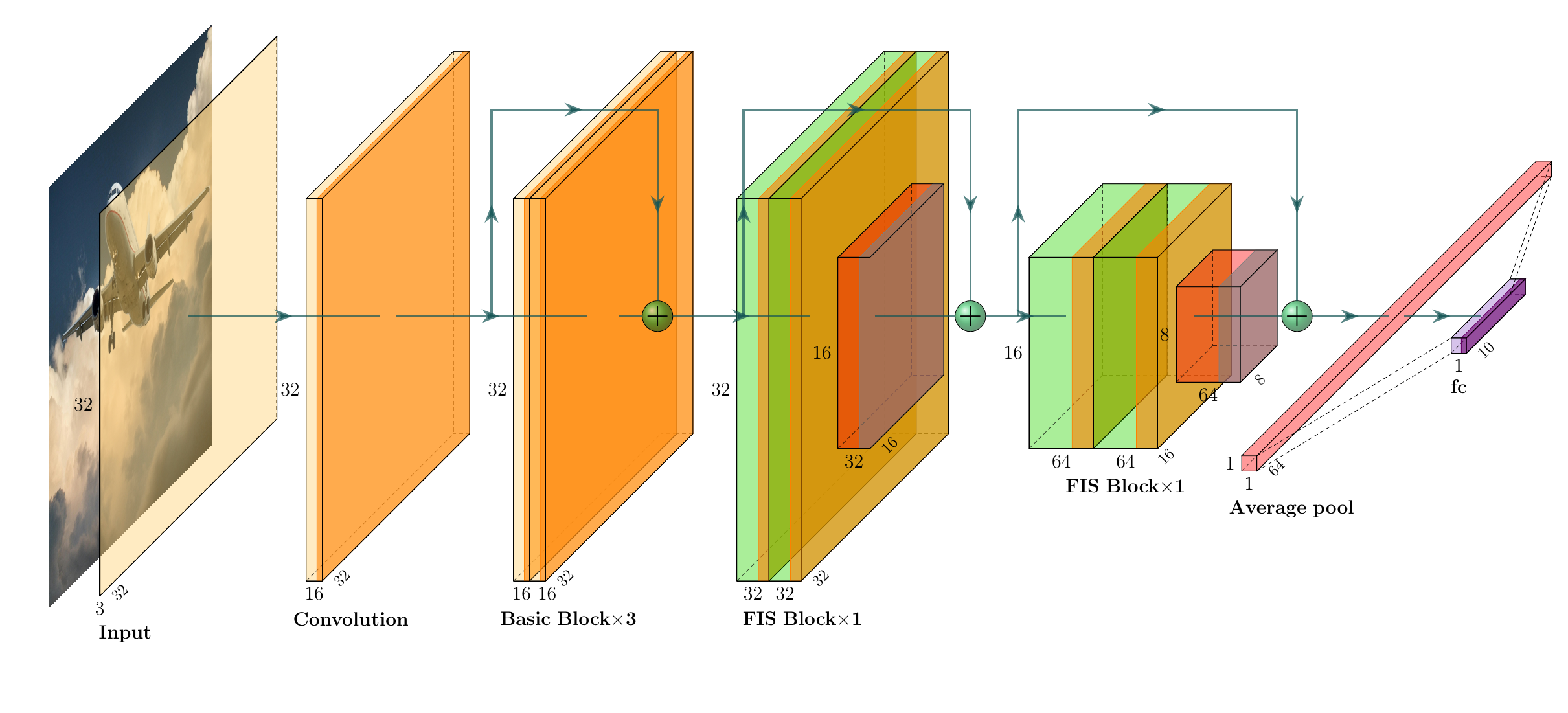}
  \caption{An exaple of modified ResNet20 architecture (L23) designed for CIFAR-10 dataset. Here, the last 2 convolution Basic Blocks are replaced by the FIS Block. A single FIS Block comprises 2 Fast Iterated Sums Layer and a Pooling Layer (which can be Max Pooling or Average Pooling). A Batch Normalization and RELU layers follow each Fast Iterated Sums Layer.}
  \label{fig:resnet20-l23-architecture}
\end{figure}

\subsubsection{Datasets and data preprocessing}
Both the base and derived ResNet architectures described in Section~\ref{sec:classification-intro} are trained and validated on two datasets (one at a time), namely, CIFAR-10 and CIFAR-100 \cite{krizhevsky2009learning}. 

The CIFAR-10 and CIFAR-100 datasets are widely used benchmarks in machine learning, each containing 60,000 color images of size 
$32 \times 32$ pixels. CIFAR-10 includes 10 classes (e.g., cat, ship, truck), while CIFAR-100 features 100 finer-grained classes grouped into 20 superclasses (e.g., animals, vehicles). Both datasets are split into 50,000 training and 10,000 test images, with CIFAR-100 offering increased complexity for more challenging image classification tasks.

Each of the datasets are preprocessed and trained using the techniques proposed in \cite{chen2020image}.
This involves the application of the composition of image transformations, including random cropping (of size 32 and padding 4), random horizontal flipping and normalization (using the mean and standard deviation of the images), to increase training data.

\subsubsection{Training setup}
The classifiers are trained using the Stochastic Gradient optimizer wrapped in a Cosine Annealing Scheduler with a learning rate of $0.1$, momentum of $0.9$, no dampening, weight decay of $5 \times 10^{-4}$, and with Nesterov option.
Except for the experiment on corner tree randomness carried out in Section~\ref{sec:classification-rand-effect}, the base and derived models are trained with 200 epochs. 

After tuning the FIS hyperparameters, we discovered that the max-plus semiring (see Section~\ref{subsec:semiring}) in combination with trees built randomly produced the best accuracy. Thus, the max-plus semiring with a random tree structure was adopted throughout the experiments unless explicitly sated otherwise (for experiments comparing different semirings and tree types, see Tables~\ref{tab:clf-perf-comp-semiring} and \ref{tab:clf-perf-comp-treetype} of the Appendix).

\subsubsection{Classification results}
Comprehensive model performance of the base and derived ResNet models on the validation set of the CIFAR-10 and CIFAR-100 are presented in Table~\ref{tab:classification-model-performance}. We emphasize two observations from this table. 

Firstly, the tiniest derived models (concerning the number of parameters and multi-add operations) under each dataset are competitive in accuracy compared to the corresponding base models. For instance, the L23 of the ResNet32 has an accuracy of $90.63\%$ and $63.96\%$ under CIFAR-10 and CIFAR-100, respectively. These accuracies differ by only $2.90\%$ and $6.20\%,$ respectively, from the corresponding base model accuracies. However, the L23 of the ResNet32 has a significant reduction in the number of parameters and multi-add operations: $\approx85\%$ and $\approx65\%$ reduction in the number of trainable parameters and multi-add operations, respectively, under both CIFAR-10 and CIFAR-100. Secondly, the accuracy of the ResNet$\{N\}$'s Downsample model is similar to the accuracy of the ResNet$\{N+12\}$'s base ($N \in \{20, 32, 44\}$) under both CIFAR-10 and CIFAR-100. For instance, under the CIFAR-10 dataset, the derived Downsample of ResNet44 has an accuracy of $94.47\%$ while the base of ResNet56 is $94.37\%$ (an accuracy difference of only $0.1\%$). On the other hand, ResNet44's Downsample has a reduction in the number of trainable parameters and multi-add operations of $\approx20\%$ and $\approx23\%$, respectively, compared to  ResNet56's base. 
\subsubsection{Ablation study}
\label{sec:classification-ablation}
To further study the effectiveness of the FIS Layer in the context of the classification task, we considered a simplified version of each ResNet's base model. This version retains the first layer of the ResNet (leaving out layers 2 and 3). We call this version \textit{Controlled Architecture (CA)}. Subsequently, we formed a derived architecture from CA by adding a single FIS Block just after the first layer; we call this \textit{CA-FIS}. Both CA and CA-FIS were trained and validated on CIFAR-10 and CIFAR-100. The results from this ablation study are presented in Table~\ref{tab:classification-ablation-study}. It was discovered that adding FIS Block to the CA architecture improved its performance across the datasets and ResNet architectures considered. For example, ResNet20's CA saw an increase of $6.57\%$ and $14.35\%$ in accuracy under CIFAR-10 and CIFAR-100, respectively, when an FIS Block was added. These results demonstrate the significance of the FIS Layer in improving the performance of the considered controlled architectures.

\subsubsection{The random effect of FIS}
\label{sec:classification-rand-effect}

As explained in \Cref{ss:FIS_layer}, the inherent randomness in a given FIS
layer stems from using different cardinal directions in building a tree (see
Section~\ref{sec:fis} for a list of all possible cardinals) as well as the
random generation of the tree structure itself.

To examine the impact of different random realizations of these structures on the FIS Layer, we employed the CA-FIS model as defined in Section \ref{sec:classification-ablation} and repeated its training 10 times on the CIFAR-10 dataset. Each training run was conducted for 20 epochs instead of the usual 200, as the focus was on assessing robustness to corner tree randomness rather than model performance. The accuracies were averaged, and the standard deviation was computed for each model. The results of this experiment are presented in Table~\ref{tab:classification-random-robustness}. We discovered that the maximum standard deviation of the top 1 and 5 accuracies are only $0.85\%$ and $0.12\%$, respectively, indicating a less significant effect of tree randomness on the FIS Layer. Thus, the FIS Layer is robust to the nature of cardinal directions used in building the corner trees.

\begin{table}[H]
    \centering
    \caption{Performance of the various proposed model architectures on the validation set of CIFAR-10 and CIFAR-100 datasets. L23, L2, L3, and Downsample are the model architectures obtained by replacing layers 2 \& 3, layer 2, layer 3, and the corresponding ResNet's downsample layer, respectively, with the Fast Iterated Sums Block (FIS Block). The Base is the ResNet architecture in \cite{chen2020image}.  Additionally, the final average pool layer of the corresponding ResNet is replaced by a composition of Fast Iterated Sums, Batch Normalization, and Pooling (which can be either average or max pool) layers. $N_p$ and $N_m$ (in Millions) are the total numbers of trainable parameters and multiply-adds, respectively.  Acc@1 and  Acc@5 (in \%) are the top 1 and 5 validation accuracies, respectively.  Numbers without and with an asterisk (*) are for the CIFAR-10 and CIFAR-100, respectively.}
    \small
    \begin{tabular}{lccccc}
        \hline
        \hline
        & & Acc@1 (\%) & Acc@5 (\%) & $N_p (M)$ & $N_m (M)$\\
        \hline\hline
       \multirow{4}{*}{ResNet20} & L23 & $89.73$, $62.75^*$ & $99.63$, $88.35^*$ & $0.06$, $0.06^*$ & $14.60$, $14.61^*$\\
        & L2 & $91.93$, $66.01^*$ & $99.76$, $89.64^*$ & $0.24$, $0.25^*$ & $27.71$, $27.71^*$\\
       & L3 & $91.60$, $65.45^*$ & $99.76$, $89.96^*$ & $0.10$, $0.11^*$ & $27.71$, $27.71^*$\\
       & Downsample & $93.35$, $68.99^*$ & $99.74$, $91.30^*$ & $0.31$, $0.32^*$ & $40.55$, $40.56^*$\\
       & Base & $92.60$, $68.83^*$ & $99.81$, $91.01^*$ & $0.27$, $0.28^*$ & $40.81$, $40.82^*$\\
       \hline 
        \multirow{4}{*}{ResNet32} & L23 & $90.63$, $63.96^*$ & $99.70$, $89.37^*$ & $0.07$, $0.07^*$ & $24.04$, $24.04^*$\\
       & L2 & $93.05$, $67.91^*$ & $99.80$, $90.50^*$ & $0.40$, $0.40^*$ & $46.58$, $46.59^*$\\
       & L3 & $92.45$, $67.46^*$ & $99.81$, $90.84^*$ & $0.15$, $0.16^*$ & $46.58$, $46.59^*$\\
       & Downsample & $93.79$, $69.93^*$ & $99.69$, $91.39^*$ & $0.51$, $0.51^*$ & $68.87$, $68.87^*$\\
       & Base & $93.53$, $70.16^*$ & $99.77$, $90.89^*$ & $0.47$, $0.47^*$ & $69.12$, $69.13^*$\\
       \hline
        \multirow{4}{*}{ResNet44} & L23 & $90.88$, $64.81^*$ & $99.69$, $89.65^*$ & $0.07$, $0.08^*$ & $33.47$, $33.48^*$\\
       & L2 & $92.91$, $68.94^*$ & $99.78$, $90.82^*$ & $0.55$, $0.56^*$ & $65.46$, $65.46^*$\\
       & L3 & $92.83$, $68.01^*$ & $99.81$, $91.13^*$ & $0.20$, $0.20^*$ & $65.46$, $65.46^*$\\
       & Downsample & $94.47$, $71.48^*$ & $99.79$, $91.88^*$ & $0.69$, $0.71^*$ & $97.18$, $97.18^*$\\
       & Base & $94.01$, $71.63^*$ & $99.77$, $91.58^*$ & $0.66$, $0.67^*$ & $97.44$, $97.44^*$\\
       \hline
        \multirow{4}{*}{ResNet56} & L23 & $90.98$, $65.20^*$ & $99.77$, $89.89^*$ & $0.09$, $0.09^*$ & $42.91$, $42.92^*$\\
       & L2 & $93.11$, $70.21^*$ & $99.77$, $90.84^*$ & $0.71$, $0.72^*$ & $84.33$, $84.34^*$\\
       & L3 & $93.28$, $69.12^*$ & $99.81$, $91.90^*$ & $0.24$, $0.25^*$ & $84.33$, $84.34^*$\\
       & Downsample & $94.28$, $72.75^*$ & $99.72$, $92.01^*$ & $0.90$, $0.90^*$ & $125.48$, $125.50^*$\\
       & Base & $94.37$, $72.63^*$ & $99.83$, $91.94^*$ & $0.86$, $0.86^*$ & $125.75$, $125.75^*$\\
       \hline
       \hline
    \end{tabular}
    \normalsize
    \label{tab:classification-model-performance}
\end{table}

\begin{table}[H]
    \centering
    \caption{Ablation study. The layer 1, average pooling and fully-connected layers of each ResNet are retained for classification and used as the ``Control Architecture (CA)". To investigate the effectiveness of the Fast Iterated Sums, a single FIS Block was added just after layer 1. These derived architectures (CA-FIS) were trained (with 200 epochs) and validated on the CIFAR-10 and CIFAR-100 datasets. $N_p$ and $N_m$ (in Millions) are the total numbers of trainable parameters and multiply-adds, respectively.  Acc@1 and  Acc@5 (in \%) are the top 1 and 5 validation accuracies, respectively. Numbers without and with an asterisk (*) are for the CIFAR-10 and CIFAR-100, respectively.}
    \small
    \begin{tabular}{lccccc}
        \hline
        \hline
        & & Acc@1 (\%) & Acc@5 (\%) & $N_p (M)$ & $N_m (M)$\\
        \hline\hline
       \multirow{2}{*}{ResNet20} & CA & $79.66$, $39.03^*$ & $99.01$, $72.2^*$ & $0.01$, $0.02^*$ & $14.60$, $14.60^*$\\
        & CA-FIS & $86.23$, $53.38^*$ & $99.43$, $83.23^*$ & $0.02$, $0.03^*$  & $14.60$, $14.60^*$\\
       \hline 
        \multirow{2}{*}{ResNet32} & CA & $83.33$, $44.07^*$ & $99.32$, $76.25^*$ & $0.02$, $0.03^*$ & $24.04$, $24.04^*$\\
       & CA-FIS & $87.74$, $59.98^*$ & $99.61$, $84.46^*$ & $0.03$, $0.04^*$ & $24.04$, $24.04^*$\\
       \hline
        \multirow{2}{*}{ResNet44} & CA & $84.81$, $47.68^*$ & $99.36$, $79.71^*$ & $0.03$, $0.03^*$ & $33.47$, $33.47^*$\\
       & CA-FIS  & $88.71$, $57.03^*$ & $99.69$, $85.04^*$ & $0.04$, $0.05^*$ & $33.47$, $33.48^*$\\
       \hline
        \multirow{2}{*}{ResNet56} & CA & $85.96$, $49.32^*$ & $99.57$, $80.87^*$ & $0.04$, $0.04^*$ & $42.91$, $42.91^*$\\
       & CA-FIS & $88.72$, $57.55^*$ & $99.59$, $85.96^*$ & $0.05$, $0.06^*$ & $42.91$, $42.91^*$\\
       \hline
       \hline
    \end{tabular}
    \normalsize
    \label{tab:classification-ablation-study}
\end{table}

\begin{table}[H]
    \centering
    \caption{Robustness of the FIS Layer to Corner Tree randomness. The CA-FIS model defined in Section~\ref{sec:classification-rand-effect} was repeatedly trained 10 times on the CIFAR-10 dataset using 20 epochs. The average and standard deviation of accuracies are reported.}
    \small
    \begin{tabular}{lcccc}
      \hline
      \hline
      & \multicolumn{2}{c}{Acc@1 (\%)} &  \multicolumn{2}{c}{Acc@5 (\%)}\\
     \cline{2-3} \cline{4-5} 
     & $\bar{x}$ & $\sigma$ & $\bar{x}$ & $\sigma$\\
     \hline
      ResNet20 & 74.23 & 0.57 & 98.38 & 0.11\\ 
      ResNet32 & 76.18 & 0.85 & 98.61 & 0.12\\ 
      ResNet44 & 77.42 & 0.50 & 98.73 & 0.08\\ 
      ResNet56 & 78.31 & 0.83 & 98.84 & 0.12\\ 
      \hline
      \hline
    \end{tabular}
    \normalsize
    \label{tab:classification-random-robustness}
\end{table}

\subsection{Anomaly detection}
\label{ss:anomaly-detection}

Fast iterated sums can be seen as a generalized version of the 2D-signatures used in \cite{ZLT_2022} and \cite{xie20252dsigdetectsemisupervisedframeworkanomaly}
where they have shown potential in texture classification and anomaly detection respectively. We consider anomaly detection as our second application, though under a different setup from that explored in  \cite{xie20252dsigdetectsemisupervisedframeworkanomaly}. As anomaly detection for image-related tasks requires texture-level information \cite{armi2019texture}, leveraging fast iterated sums provides a structured approach to capturing fine-grained patterns and distinguishing normal from anomalous regions. This is particularly relevant in applications such as medical imaging, defect detection in manufacturing, and remote sensing, where anomalies often manifest as subtle texture deviations \cite{tschuchnig2021anomaly}.

\subsubsection{Dataset and data preprocessing}
For our anomaly detection task, we use the MVTec AD dataset \cite{bergmann2019mvtec}, which consists of high-resolution texture images commonly used for industrial defect detection. Those texture images are utilized in \cite{aota2023zero}, ensuring consistency with prior work on texture-based anomaly detection. The datasets contain five categories of high-resolution textures, namely carpet, grid, leather, tile, and wood. Each category has training and test sets.

The training set contains normal images only, while the test set consist of both normal and anomaly images. In addition, each category's test set has different kinds of faults. For instance, carpet has five fault groups: \textit{color, cut, hole, metal contamination, and thread}. This makes the data a challenging task for any anomaly detection algorithm. In total, there are 1266 training images and 515 test images; see Table~\ref{tab:ad-data-brkdwn} for the breakdown of the number of images by category.
\begin{table}[H]
    \centering
        \caption{The breakdown of the texture images from the MVTec AD \cite{bergmann2019mvtec} dataset. The training set contains only normal images, while the test set consists of both normal and anomaly images.
        }
    \small
    \begin{tabular}{lcc} 
    \hline
    \hline
   \multirow{2}{*}{Category} & \multicolumn{2}{c}{Number of images}\\
   \cline{2-3}
   & Train & Test
    \\
    \hline 
     Carpet & 270 & 117\\
     Grid & 264 & 78 \\
     Leather & 245 & 124\\ 
     Tile & 230 & 124\\
     Wood & 247 & 79\\
    \hline
    \textbf{Total} 
    & 1266 & 515 \\
    \hline
    \hline
    \end{tabular}
    \normalsize
    \label{tab:ad-data-brkdwn}
\end{table}
Following the convention used in \cite{roth2022towards}, each image is resized to $224 \times 224$ and normalized using the ImageNet mean and standard deviation ($\bar{x}=(0.485, 0.456, 0.406)$, $\sigma = (0.229, 0.224, 0.225)$). The training set was further split into training and validation sets using a $90\%$-$10\%$ train-validation split.
\subsubsection{Autoencoder network architecture}
\label{sec:ae-network-architecture}

This section explains how FIS Layers function as an encoder in an autoencoder network for anomaly detection. The use of autoencoder for anomaly detection has been explored in \cite{davletshina2020unsupervised, sakurada2014anomaly, huang2019inverse}. We emphasize \cite{huang2019inverse} as they used the MVTec Dataset as one of the benchmarks. In their research, a novel anomaly detection method is introduced by reformulating reconstruction-based approaches into a restoration task, where selected attributes are erased and then recovered to enforce the learning of semantic feature embeddings. During testing, it is expected that anomalies will exhibit high restoration errors since the model learns attributes only from normal data.

We consider the FIS layer as building blocks for an encoder in an autoencoder architecture to directly measure how well it captures the most important features of the input data. The proposed autoencoder's encoder consists of two FIS Layers (with each followed by RELU and Batch Normalization layers). The decoder part is a composition of two convolution layers, with the first layer followed by RELU and Batch Normalization layers. Figure~\ref{fig:ae-architecture} illustrates the proposed autoencoder.

Inputs to the autoencoder are the features extracted from the original images using an ImgaNet-pretrained ResNet. The extracted features are obtained by concatenating the outputs from layer 2 and layer 3 of the pre-trained model, with the model’s weights remaining fixed throughout the feature extraction process.

Since the input dimensions have already been reduced to a manageable size (in our case, $28 \times 28$), the autoencoder operates solely on the channel dimension. For all the pre-trained models considered in this study, the concatenated feature representation consists of $1536$ channels. Consequently, the autoencoder is parameterized by the latent dimension, which determines the number of channels retained in the latent space (this is the $l$ in Figure~\ref{fig:ae-architecture}). We name the proposed autoencoder described in Figure~\ref{fig:ae-architecture} as \textit{FIS-AE}.

\begin{figure}[H]
  \centering
  \includegraphics[width=0.7\textwidth]{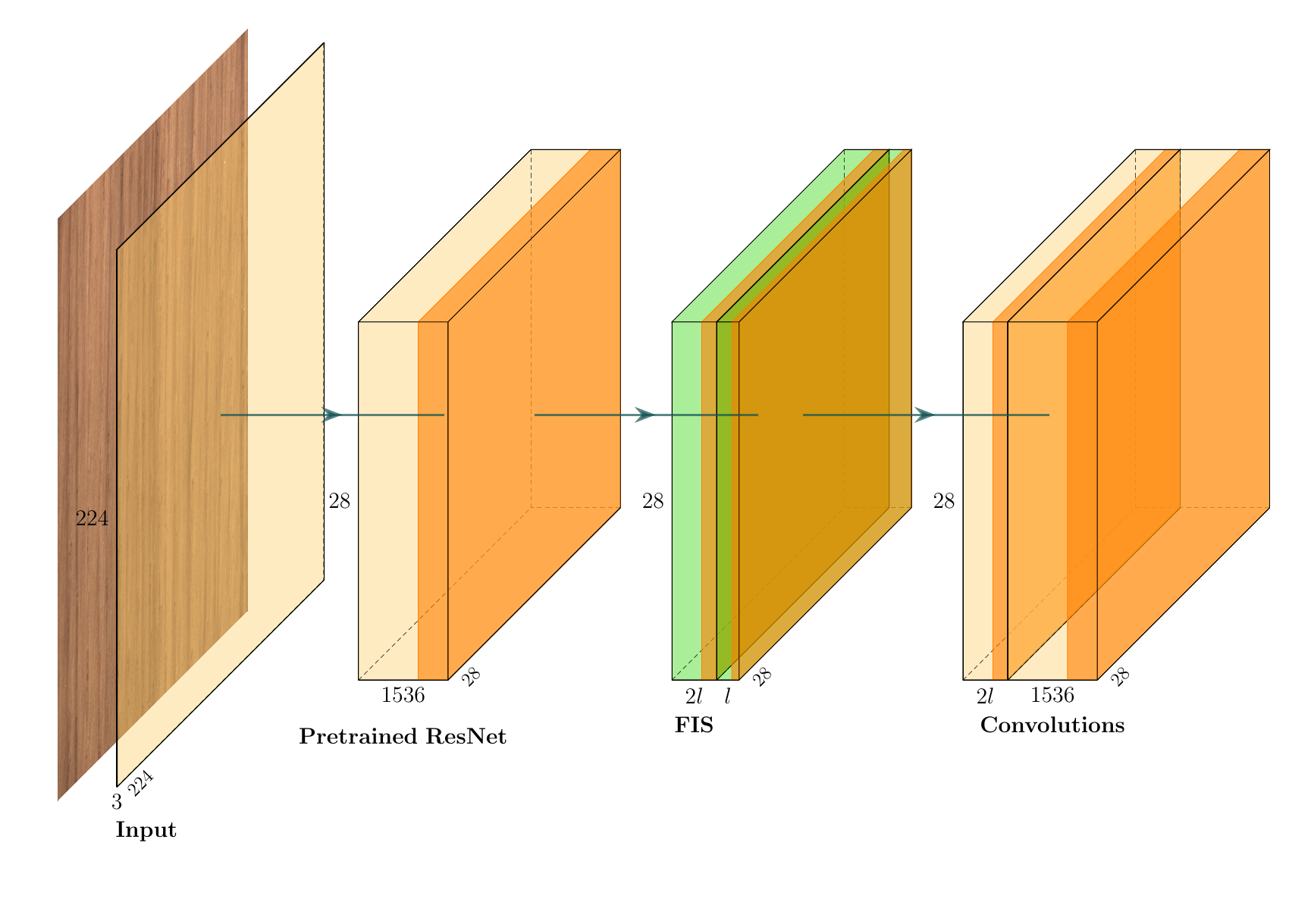}
  \caption{FIS-AE architecture. Features are extracted from the normal images using the ImageNet pre-trained ResNet, which are then fed to the autoencoder. These features are encoded by the FIS Layers using $l$ channels.}
  \label{fig:ae-architecture}
\end{figure}

\subsubsection{Autoencoder as anomaly detector}
\label{sec:ae-as-ad}
The way FIS-AE described in Section~\ref{sec:ae-network-architecture} was used for anomaly detection is explained as follows. FIS-AE is trained exclusively on normal images to reconstruct the input features extracted from an ImageNet-pretrained ResNet. The reconstruction error, measured as the mean squared error in our case, is expected to be low for normal images and significantly higher for anomalous images \cite{huang2019inverse}.

The reconstruction errors serve as the basis for estimating anomaly scores, which can then be used to determine anomaly thresholds and evaluate various metrics, such as the area under the receiver operating characteristic curve (AUROC), precision, and recall. Specifically, given a trained FIS-AE and a set of test images (comprising both normal and anomalous samples), the reconstruction error for each image is computed as the mean squared error over the channel dimension of the extracted features.
 Following the description in Section~\ref{sec:ae-network-architecture}, the error on an image will have the shape of $28 \times 28$. The error is then flattened and sorted in descending order. Consequently, following a similar convention in
 \cite{roth2022towards}, the anomaly score for the corresponding image is calculated by finding the mean of the first 10 highest flattened error values (see Table~\ref{tab:ad-topn-robustness} of the Appendix for the results of using different thresholds other than 10). This approach is considered to have a robust anomaly score. Various metrics can be calculated once the anomaly scores are computed for all the images in the test set.

\subsubsection{Training setup}
\label{sec:ad-data-train}

 FIS-AE was trained on one category at a time using the mean squared error criterion and Adam optimizer with a learning rate of $0.001$. We set the default number of epochs to $200$. As discussed in Section~\ref{sec:ae-network-architecture}, FIS-AE requires an ImageNet-pretrained ResNet for feature extraction. To this end, we set ImageNet-pretrained Wide-ResNet50 as the default backbone. In addition, $l=32$ was set as the default for the latent dimension (the number of channels to keep from the extracted features); see Tables~\ref{tab:ad-latent-dim-robustness} and \ref{tab:ad-backbone-robustness} for the results of using different values of $l$ and ImageNet-pretrained ResNets respectively in the Appendix. Following the classification task experiments, max-plus semiring with a random tree structure was adopted.

\subsubsection{Test metrics}
\label{sec:ad-data-train}
The performance of FIS-AE was assessed on the test set using image-level AUROC as set out in the last paragraph of Section~\ref{sec:ae-as-ad}. Concretely, the AUCROC was calculated for each data category using the \texttt{roc_auc_score} from the \texttt{metrics} module of scikit-learn: where \texttt{y_true} is set to an array of 0's and 1's (with 0 and 1 depicting normal and anomaly images, respectively); and \texttt{y\_score} set to the array of the calculated anomaly scores.

\subsubsection{Anomaly detection results and discussion}
The performance of FIS-AE (with the default configurations as set out in Secation~\ref{sec:ad-data-train}) on the test set of each category is presented in Table~\ref{tab:ad-model-performance}. Overall, the highest AUROC score ($100\%$) was attained on the leather category, whereas the least ($89.6\%$) was observed on the grid. The rationale behind the lowest performance on the grid can be associated to the nature of the corresponding fault types in this category. Faults in the grid are less pronounced and are, in many cases, similar to normal images. Thus, the autoencoder will be tricked into recreating fault features as normal ones and vice versa.

We compare our results to those in the literature. In particular, studies in \cite{huang2019inverse, aota2023zero, roth2022towards} were chosen for comparison. The study in \cite{aota2023zero} was chosen as they also considered anomaly detection in texture images, while \cite{huang2019inverse} was considered as they introduced an encoding method. Furthermore, \cite{roth2022towards} was included because features extracted by ImageNet-pretrained ResNet were used in their study. Overall, our proposed FIS-AE has competitive AUROC scores. For instance, FIS-AE is much better than that of \cite{huang2019inverse} in all categories and slightly better than \cite{roth2022towards} under the carpet category ($1.2\%$ higher) and \cite{aota2023zero} under the wood category ($0.4\%$).

\begin{table}[H]
    \centering
    \caption{Image-level anomaly detection performance on MVTec AD \cite{bergmann2019mvtec} dataset.  The model (that uses the pre-trained Wide-ResNet50 as the backbone) with the best average AUROC was chosen from the PatchCore \cite{roth2022towards} paper for comparison.}
    \small
    \begin{tabular}{lccccc}
      \hline
      \hline
      & \multicolumn{5}{c}{AUROC (\%)}\\
       \cline{2-6}
      & Carpet & Grid & Leather & Tile & Wood\\
     \hline
     ITAE \cite{huang2019inverse} & 70.6 & 88.3 & 86.2 & 73.5 & 92.3\\
      PatchCore-25 \cite{roth2022towards} & 98.7 & 98.2 & 100  & 98.7 & 99.2\\
      Zero-shot \cite{aota2023zero} & 99.9 & 100 & 100 & 99.1 & 98.9\\
      \hline
      FIS-AE & 99.9 & 89.6 & 100 & 97.7 & 99.3\\
      \hline
      \hline
    \end{tabular}
    \normalsize
    \label{tab:ad-model-performance}
\end{table}

\subsubsection{Ablation study on FIS-AE}
To further demonstrate the effectiveness of using FIS layers in the encoder network, we carried out an ablation study on FIS-AE architecture. To this end, we replaced the FIS layers in the autoencoder accordingly. The results from this experiment are presented in Table~\ref{tab:ad-ablation-study}.

Overall, the average AUROC score (calculated over all the categories) is higher ($1.2\%$ more) when the FIS layers are used in the encoder network. In addition, using FIS layers produced a lower standard deviation of $3.9\%$ AUROC compared to $6.3\%$ when convolution layers are used in the encoder network. In particular, there is a significant difference in the performance on the grid category, where the FIS-backed encoder is $6\%$ AUROC higher than that of the convolution-backed encoder.
\begin{table}[H]
    \centering
    \caption{Ablation study. Image-level anomaly detection performance on MVTec AD \cite{bergmann2019mvtec} dataset. We use pure convolution layers in the encoder as a substitute for FIS layers.}
    \small
    \begin{tabular}{lccccccc}
      \hline
      \hline
      & \multicolumn{5}{c}{AUROC (\%)}\\
       \cline{2-8}
      & Carpet & Grid & Leather & Tile & Wood & $\bar{x}$ & $\sigma$\\
     \hline
       Ablation & 99.9 & 83.6 & 100 & 97.8 & 99.4 & 96.1 & 6.3\\
       FIS-AE & 99.9 & 89.6 & 100 & 97.7 & 99.3 & 97.3 & 3.9\\
      \hline
       \hline
    \end{tabular}
    \normalsize
    \label{tab:ad-ablation-study}
\end{table}

\appendix
\section{Appendix}

We collect here some additional results and figures referenced in the main text. 
\label{sec:appendix}

\begin{table}[H]
    \centering
    \caption{Classification model performance comparison using the two semirings: real and max-plus. The CA-FIS model defined in Section~\ref{sec:classification-rand-effect} was trained and validated on the CIFAR-10 dataset for 200 epochs. The validation accuracies are reported.}
    \small
    \begin{tabular}{lcccc}
      \hline
      \hline
      & \multicolumn{2}{c}{Acc@1 (\%)} &  \multicolumn{2}{c}{Acc@5 (\%)}\\
     \cline{2-3} \cline{4-5} 
     & real & max-plus &  real & max-plus\\
     \hline
      ResNet20 & 85.57 & 86.23 & 99.37 & 99.43\\ 
      ResNet32 & 86.71 & 87.74 & 99.53 & 99.61\\ 
      ResNet44 & 87.50 & 88.71 & 99.53 & 99.69\\
      ResNet56 & 88.15 & 88.72 & 99.60 &  99.59\\ 
      \hline
      \hline
    \end{tabular}
    \normalsize
    \label{tab:clf-perf-comp-semiring}
\end{table}

\begin{table}[H]
    \centering
    \caption{Classification model performance comparison using the three proposed tree types: random, linear and linear-NE. The CA-FIS model defined in Section~\ref{sec:classification-rand-effect} was trained and validated on the CIFAR-10 dataset for 200 epochs. The validation accuracies are reported.}
    \small
    \begin{tabular}{lcccccc}
      \hline
      \hline
      & \multicolumn{3}{c}{Acc@1 (\%)} &  \multicolumn{3}{c}{Acc@5 (\%)}\\
     \cline{2-4} \cline{5-7} 
     & random & linear &  linear-NE & random & linear &  linear-NE\\
           \hline
      ResNet20 & 86.23 & 85.63 & 76.01 & 99.43 & 99.45 & 98.66\\ 
      ResNet32 & 87.74 & 86.83 & 68.28 & 99.61 & 99.53 & 97.37\\ 
      ResNet44 & 88.71 & 87.68 & 84.14 & 99.69 & 99.54  & 99.34\\
      ResNet56 & 88.72 & 88.26 & 85.35 & 99.59 & 99.54 & 99.37\\ 
      \hline
      \hline
    \end{tabular}
    \normalsize
    \label{tab:clf-perf-comp-treetype}
\end{table}

\begin{table}[H]
    \centering
    \caption{The effect of using different top $n$ error values (sorted in descending order) in the flattened array of reconstruction errors (of length $28^2$) on the performance of the proposed anomaly detection model. As explained in Section~\ref{sec:ae-as-ad}, the mean of the first $n$ values in the sorted array was used as the anomaly score. The mean and standard AUROC scores across the datasets are reported. For this experiment, ImageNet-pretrained Wide-ResNet50 was used and latent dimension ($l$) was fixed to 32.}
    \small
    \begin{tabular}{lcc}
      \hline
      \hline
        \multirow{2}{*}{Top $n$} & \multicolumn{2}{c}{AUROC (\%)}\\
       \cline{2-3}
      & $\bar{x}$ & $\sigma$\\
     \hline
      5 & 96.8 & 5.0\\
      10 & 97.3 & 4.4 \\
      15 & 97.5 & 4.2\\
      20 & 97.6 & 3.9\\
      25 & 97.7 & 3.8\\
      30 & 97.8 & 3.7\\
      $28^2$ & 92.6 & 10.0\\
      \hline
       \hline
    \end{tabular}
    \normalsize
    \label{tab:ad-topn-robustness}
\end{table}

\begin{table}[H]
    \centering
    \caption{The effect of latent dimension on anomaly detection performance on MVTec AD \cite{bergmann2019mvtec} dataset. The mean and standard AUROC scores across the datasets are reported. For this experiment, ImageNet-pretrained Wide-ResNet50 was used.}
    \small
    \begin{tabular}{ccc}
      \hline
      \hline
        \multirow{2}{*}{Latent dimension} & \multicolumn{2}{c}{AUROC (\%)}\\
       \cline{2-3}
       & $\bar{x}$ & $\sigma$\\
     \hline
      4 & 88.4 & 11.7\\
      8 & 93.6 & 8.1\\
      16 & 96.5 & 5.0\\
      32 & 97.4 & 3.8\\
      \hline
       \hline
    \end{tabular}
    \normalsize
    \label{tab:ad-latent-dim-robustness}
\end{table}

\begin{table}[H]
    \centering
    \caption{The effect of using different pre-trained ResNet as the auto-encoder backbone on anomaly detection performance on MVTec AD \cite{bergmann2019mvtec} dataset. The mean and standard AUROC scores across the datasets are reported. For this experiment, the latent dimension ($l$) was fixed to 32.}
    \small
    \begin{tabular}{lcc}
      \hline
      \hline
        \multirow{2}{*}{Pre-trained model} & \multicolumn{2}{c}{AUROC (\%)}\\
       \cline{2-3}
      & $\bar{x}$ & $\sigma$\\
     \hline
      ResNet50 & 96.2 & 4.7\\
      Wide-ResNet50 & 97.3 & 3.1\\
      ResNet101 & 95.1 & 8.2\\
      Wide-ResNet101 & 95.7 & 5.2\\
      \hline
       \hline
    \end{tabular}
    \normalsize
    \label{tab:ad-backbone-robustness}
\end{table}

\section*{Funding}
R.~Ibraheem is a PhD student in EPSRC's MAC-MIGS Centre for Doctoral Training. MAC-MIGS is supported by the UK's Engineering and Physical Science Research Council (grant number EP/S023291/1).

J.~Diehl is partially supported by 
DFG grant 539875438 ``Counting permutation and chirotope patterns: algorithms, algebra, and applications''
through the DFG priority programme SPP 2458 "Combinatorial Synergies".
The project is co-financed by the European Regional Development Fund (ERDF) within the framework of the Interreg VIA programme.

L. Schmitz acknowledges support from DFG CRC/TRR 388 ``Rough
Analysis, Stochastic Dynamics and Related Fields''. 

\section*{Competing interest declaration}
The authors declare that they have no known competing financial interests or personal relationships that could have appeared to influence the work reported in this paper.

\section*{Data availability}
The data used for all the experiments and modelling in this research are publicly available CIFAR10, CIFAR100 and MVTec AD datasets.

\section*{Code availability}
The implementation of our proposed methods, as well as the experiments reported,
are publicly available at \url{https://github.com/diehlj/fast-iterated-sums}.


\begin{thebibliography}{10}

\bibitem{sutskever2014sequence}
Ilya Sutskever, Oriol Vinyals, and Quoc~V Le.
\newblock Sequence to sequence learning with neural networks.
\newblock {\em Advances in neural information processing systems}, 27, 2014.

\bibitem{radford2018improving}
Alec Radford, Karthik Narasimhan, Tim Salimans, Ilya Sutskever, et~al.
\newblock Improving language understanding by generative pre-training.
\newblock 2018.

\bibitem{devlin2019bert}
Jacob Devlin, Ming-Wei Chang, Kenton Lee, and Kristina Toutanova.
\newblock Bert: Pre-training of deep bidirectional transformers for language understanding.
\newblock In {\em Proceedings of the 2019 conference of the North American chapter of the association for computational linguistics: human language technologies, volume 1 (long and short papers)}, pages 4171--4186, 2019.

\bibitem{vaswani2017attention}
Ashish Vaswani, Noam Shazeer, Niki Parmar, Jakob Uszkoreit, Llion Jones, Aidan~N Gomez, {\L}ukasz Kaiser, and Illia Polosukhin.
\newblock Attention is all you need.
\newblock {\em Advances in neural information processing systems}, 30, 2017.

\bibitem{touvron2023llama}
Hugo Touvron, Thibaut Lavril, Gautier Izacard, Xavier Martinet, Marie-Anne Lachaux, Timoth{\'e}e Lacroix, Baptiste Rozi{\`e}re, Naman Goyal, Eric Hambro, Faisal Azhar, et~al.
\newblock Llama: Open and efficient foundation language models.
\newblock {\em arXiv preprint arXiv:2302.13971}, 2023.

\bibitem{ronneberger2015u}
Olaf Ronneberger, Philipp Fischer, and Thomas Brox.
\newblock U-net: Convolutional networks for biomedical image segmentation.
\newblock In {\em Medical image computing and computer-assisted intervention--MICCAI 2015: 18th international conference, Munich, Germany, October 5-9, 2015, proceedings, part III 18}, pages 234--241. Springer, 2015.

\bibitem{cciccek20163d}
{\"O}zg{\"u}n {\c{C}}i{\c{c}}ek, Ahmed Abdulkadir, Soeren~S Lienkamp, Thomas Brox, and Olaf Ronneberger.
\newblock 3d u-net: learning dense volumetric segmentation from sparse annotation.
\newblock In {\em Medical Image Computing and Computer-Assisted Intervention--MICCAI 2016: 19th International Conference, Athens, Greece, October 17-21, 2016, Proceedings, Part II 19}, pages 424--432. Springer, 2016.

\bibitem{isola2017image}
Phillip Isola, Jun-Yan Zhu, Tinghui Zhou, and Alexei~A Efros.
\newblock Image-to-image translation with conditional adversarial networks.
\newblock In {\em Proceedings of the IEEE conference on computer vision and pattern recognition}, pages 1125--1134, 2017.

\bibitem{wang2018video}
Ting-Chun Wang, Ming-Yu Liu, Jun-Yan Zhu, Guilin Liu, Andrew Tao, Jan Kautz, and Bryan Catanzaro.
\newblock Video-to-video synthesis.
\newblock In {\em Proceedings of the 32nd International Conference on Neural Information Processing Systems}, pages 1152--1164, 2018.

\bibitem{dosovitskiy2020image}
Alexey Dosovitskiy, Lucas Beyer, Alexander Kolesnikov, Dirk Weissenborn, Xiaohua Zhai, Thomas Unterthiner, Mostafa Dehghani, Matthias Minderer, G~Heigold, S~Gelly, et~al.
\newblock An image is worth 16x16 words: Transformers for image recognition at scale.
\newblock In {\em International Conference on Learning Representations}, 2020.

\bibitem{lyons1994differential}
Terry Lyons.
\newblock Differential equations driven by rough signals (i): An extension of an inequality of lc young.
\newblock {\em Mathematical Research Letters}, 1(4), 1994.

\bibitem{gu2021efficiently}
Albert Gu, Karan Goel, and Christopher R{\'e}.
\newblock Efficiently modeling long sequences with structured state spaces.
\newblock {\em arXiv preprint arXiv:2111.00396}, 2021.

\bibitem{gu2023mamba}
Albert Gu and Tri Dao.
\newblock Mamba: Linear-time sequence modeling with selective state spaces.
\newblock {\em arXiv preprint arXiv:2312.00752}, 2023.

\bibitem{roesser1975discrete}
Robert Roesser.
\newblock A discrete state-space model for linear image processing.
\newblock {\em IEEE transactions on automatic control}, 20(1):1--10, 1975.

\bibitem{pauli2024state}
Patricia Pauli, Dennis Gramlich, and Frank Allg{\"o}wer.
\newblock State space representations of the roesser type for convolutional layers.
\newblock {\em IFAC-PapersOnLine}, 58(17):344--349, 2024.

\bibitem{liu2024vmamba}
Yue Liu, Yunjie Tian, Yuzhong Zhao, Hongtian Yu, Lingxi Xie, Yaowei Wang, Qixiang Ye, Jianbin Jiao, and Yunfan Liu.
\newblock Vmamba: Visual state space model.
\newblock {\em Advances in neural information processing systems}, 37:103031--103063, 2024.

\bibitem{muca2024theoretical}
Nicola Muca~Cirone, Antonio Orvieto, Benjamin Walker, Cristopher Salvi, and Terry Lyons.
\newblock Theoretical foundations of deep selective state-space models.
\newblock {\em Advances in Neural Information Processing Systems}, 37:127226--127272, 2024.

\bibitem{chen1957integration}
Kuo-Tsai Chen.
\newblock Integration of paths, geometric invariants and a generalized baker-hausdorff formula.
\newblock {\em Annals of Mathematics}, 65(1):163--178, 1957.

\bibitem{morrill2021neural}
James Morrill, Cristopher Salvi, Patrick Kidger, and James Foster.
\newblock Neural rough differential equations for long time series.
\newblock In {\em International Conference on Machine Learning}, pages 7829--7838. PMLR, 2021.

\bibitem{ibraheem2023early}
Rasheed Ibraheem, Yue Wu, Terry Lyons, and Goncalo Dos~Reis.
\newblock Early prediction of lithium-ion cell degradation trajectories using signatures of voltage curves up to 4-minute sub-sampling rates.
\newblock {\em Applied Energy}, 352:121974, 2023.

\bibitem{diehlFRUITSFeatureExtraction2024}
Joscha Diehl and Richard Krieg.
\newblock {{FRUITS}}: Feature extraction using iterated sums for time series classification.
\newblock {\em Data Mining and Knowledge Discovery}, 38(6):4122--4156, November 2024.

\bibitem{chevyrev2022signature}
Ilya Chevyrev and Harald Oberhauser.
\newblock Signature moments to characterize laws of stochastic processes.
\newblock {\em Journal of Machine Learning Research}, 23(176):1--42, 2022.

\bibitem{moreno2024rough}
Fernando Moreno-Pino, {\'A}lvaro Arroyo, Harrison Waldon, Xiaowen Dong, and {\'A}lvaro Cartea.
\newblock Rough transformers: Lightweight and continuous time series modelling through signature patching.
\newblock {\em Advances in Neural Information Processing Systems}, 37:106264--106294, 2024.

\bibitem{DS23}
Joscha Diehl and Leonard Schmitz.
\newblock Two-parameter sums signatures and corresponding quasisymmetric functions, 2022.
\newblock arXiv:2210.14247.

\bibitem{diehl2024signature}
Joscha Diehl, Kurusch Ebrahimi-Fard, Fabian~N. Harang, and Samy Tindel.
\newblock On the signature of an image.
\newblock {\em Stochastic Processes and their Applications}, 187:104661, 2025.

\bibitem{ZLT_2022}
Sheng Zhang, Guang Lin, and Samy Tindel.
\newblock Two-dimensional signature of~images and texture classification.
\newblock {\em Proceedings of the Royal Society A: Mathematical, Physical and Engineering Sciences}, 478(2266), oct 2022.

\bibitem{giusti2025topological}
Chad Giusti, Darrick Lee, Vidit Nanda, and Harald Oberhauser.
\newblock A topological approach to mapping space signatures.
\newblock {\em Advances in Applied Mathematics}, 163:102787, 2025.

\bibitem{lotter2024signaturematricesmembranes}
Felix Lotter and Leonard Schmitz.
\newblock Signature matrices of membranes, 2024.
\newblock arXiv:arxiv.org/abs/2409.11996.

\bibitem{ZLT22}
Sheng Zhang, Guang Lin, and Samy Tindel.
\newblock Two-dimensional signature of images and texture classification.
\newblock {\em Proceedings of the Royal Society A}, 478(2266):20220346, 2022.

\bibitem{xie20252dsigdetectsemisupervisedframeworkanomaly}
Xinheng Xie, Kureha Yamaguchi, Margaux Leblanc, Simon Malzard, Varun Chhabra, Victoria Nockles, and Yue Wu.
\newblock 2dsig-detect: a semi-supervised framework for anomaly detection on image data using 2d-signatures, 2025.

\bibitem{chevyrev2024multiplicative}
Ilya Chevyrev, Joscha Diehl, Kurusch Ebrahimi-Fard, and Nikolas Tapia.
\newblock A multiplicative surface signature through its magnus expansion.
\newblock {\em arXiv preprint arXiv:2406.16856}, 2024.

\bibitem{lee2024surface}
Darrick Lee.
\newblock The surface signature and rough surfaces.
\newblock {\em arXiv preprint arXiv:2406.16857}, 2024.

\bibitem{lee2023random}
Darrick Lee and Harald Oberhauser.
\newblock Random surfaces and higher algebra.
\newblock {\em arXiv preprint arXiv:2311.08366}, 2023.

\bibitem{even2021counting}
Chaim Even-Zohar and Calvin Leng.
\newblock Counting small permutation patterns.
\newblock In {\em Proceedings of the 2021 ACM-SIAM Symposium on Discrete Algorithms (SODA)}, pages 2288--2302. SIAM, 2021.

\bibitem{bergmann2019mvtec}
Paul Bergmann, Michael Fauser, David Sattlegger, and Carsten Steger.
\newblock Mvtec ad--a comprehensive real-world dataset for unsupervised anomaly detection.
\newblock In {\em Proceedings of the IEEE/CVF conference on computer vision and pattern recognition}, pages 9592--9600, 2019.

\bibitem{diehl2020time}
Joscha Diehl, Kurusch Ebrahimi-Fard, and Nikolas Tapia.
\newblock Time-warping invariants of multidimensional time series.
\newblock {\em Acta Applicandae Mathematicae}, 170(1):265--290, 2020.

\bibitem{diehl2020tropical}
Joscha Diehl, Kurusch Ebrahimi-Fard, and Nikolas Tapia.
\newblock Tropical time series, iterated-sums signatures, and quasisymmetric functions.
\newblock {\em SIAM Journal on Applied Algebra and Geometry}, 6(4):563--599, 2022.

\bibitem{blelloch1990prefix}
Guy~E Blelloch.
\newblock Prefix sums and their applications.
\newblock 1990.

\bibitem{tothseq2tens}
Csaba Toth, Patric Bonnier, and Harald Oberhauser.
\newblock Seq2tens: An efficient representation of sequences by low-rank tensor projections.
\newblock In {\em International Conference on Learning Representations}, 2021.

\bibitem{krieg2024}
Richard Krieg.
\newblock Towards a unifying sequence-to-sequence deep learning layer based on iterated sums.
\newblock Master's thesis, University of Greifswald, 2024.

\bibitem{diehl2024generalization}
Joscha Diehl and Emanuele Verri.
\newblock On a generalization of corner trees.
\newblock {\em arXiv preprint arXiv:2408.08293}, 2024.

\bibitem{beniamini2024counting}
Gal Beniamini and Nir Lavee.
\newblock Counting permutation patterns with multidimensional trees.
\newblock {\em arXiv preprint arXiv:2407.04971}, 2024.

\bibitem{golan2013semirings}
Jonathan~S Golan.
\newblock {\em Semirings and their Applications}.
\newblock Springer Science \& Business Media, 2013.

\bibitem{zhang2018tropical}
Liwen Zhang, Gregory Naitzat, and Lek-Heng Lim.
\newblock Tropical geometry of deep neural networks.
\newblock In {\em International Conference on Machine Learning}, pages 5824--5832. PMLR, 2018.

\bibitem{he2016deep}
Kaiming He, Xiangyu Zhang, Shaoqing Ren, and Jian Sun.
\newblock Deep residual learning for image recognition.
\newblock In {\em Proceedings of the IEEE conference on computer vision and pattern recognition}, pages 770--778, 2016.

\bibitem{chen2020image}
Yaofo Chen.
\newblock Image classification codebase.
\newblock \url{https://github.com/chenyaofo/image-classification-codebase}, 2021.

\bibitem{krizhevsky2009learning}
Alex Krizhevsky, Geoffrey Hinton, et~al.
\newblock Learning multiple layers of features from tiny images.
\newblock 2009.

\bibitem{armi2019texture}
Laleh Armi and Shervan Fekri-Ershad.
\newblock Texture image analysis and texture classification methods-a review.
\newblock {\em arXiv preprint arXiv:1904.06554}, 2019.

\bibitem{tschuchnig2021anomaly}
Maximilian~E Tschuchnig and Michael Gadermayr.
\newblock Anomaly detection in medical imaging-a mini review.
\newblock In {\em International Data Science Conference}, pages 33--38. Springer, 2021.

\bibitem{aota2023zero}
Toshimichi Aota, Lloyd Teh~Tzer Tong, and Takayuki Okatani.
\newblock Zero-shot versus many-shot: Unsupervised texture anomaly detection.
\newblock In {\em Proceedings of the IEEE/CVF Winter Conference on Applications of Computer Vision}, pages 5564--5572, 2023.

\bibitem{roth2022towards}
Karsten Roth, Latha Pemula, Joaquin Zepeda, Bernhard Sch{\"o}lkopf, Thomas Brox, and Peter Gehler.
\newblock Towards total recall in industrial anomaly detection.
\newblock In {\em Proceedings of the IEEE/CVF conference on computer vision and pattern recognition}, pages 14318--14328, 2022.

\bibitem{davletshina2020unsupervised}
Diana Davletshina, Valentyn Melnychuk, Viet Tran, Hitansh Singla, Max Berrendorf, Evgeniy Faerman, Michael Fromm, and Matthias Schubert.
\newblock Unsupervised anomaly detection for x-ray images.
\newblock {\em arXiv preprint arXiv:2001.10883}, 2020.

\bibitem{sakurada2014anomaly}
Mayu Sakurada and Takehisa Yairi.
\newblock Anomaly detection using autoencoders with nonlinear dimensionality reduction.
\newblock In {\em Proceedings of the MLSDA 2014 2nd workshop on machine learning for sensory data analysis}, pages 4--11, 2014.

\bibitem{huang2019inverse}
Chaoqing Huang, Jinkun Cao, Fei Ye, Maosen Li, Ya~Zhang, and Cewu Lu.
\newblock Inverse-transform autoencoder for anomaly detection.
\newblock {\em arXiv preprint arXiv:1911.10676}, 2(4), 2019.

\end{thebibliography}

\end{document}